\numberwithin{equation}{section}
\theoremstyle{plain}
\newtheorem{thm}{Theorem}[section]
\newtheorem{lemma}{Lemma}[section]
\newtheorem*{theorem*}{Theorem}
\theoremstyle{definition}
\newtheorem{rem}{Remark}[section]
\newcommand{\rr}{\mathbb{R}}
\newcommand{\hp}{\mathbb{H}}
\newcommand{\pp}{\partial}
\newcommand{\upa}{\uparrow}
\newcommand{\dn}{\downarrow}
\newcommand{\dd}{\mathbb{D}}
\providecommand{\keywords}[1]
{
  \small	
  \textbf{\textit{Keywords---}} #1
}
\begin{document}
\title{Consistent Spectral Clustering in Hyperbolic Spaces}
\author[1]{Sagar Ghosh}
\author[2]{Swagatam Das}

\affil[1]{Computer and Communication Sciences Division, Indian Statistical Institute, Kolkata}
\affil[2]{Electronics and Communication Sciences Unit, Indian Statistical Institute, Kolkata}
\maketitle

\begin{abstract}
    Clustering, as an unsupervised technique, plays a pivotal role in various data analysis applications. Among clustering algorithms, Spectral Clustering on Euclidean Spaces has been extensively studied. However, with the rapid evolution of data complexity, Euclidean Space is proving to be inefficient for representing and learning algorithms. Although Deep Neural Networks on hyperbolic spaces have gained recent traction, clustering algorithms or non-deep machine learning models on non-Euclidean Spaces remain underexplored. In this paper, we propose a spectral clustering algorithm on Hyperbolic Spaces to address this gap. Hyperbolic Spaces offer advantages in representing complex data structures like hierarchical and tree-like structures, which cannot be embedded efficiently in Euclidean Spaces. Our proposed algorithm replaces the Euclidean Similarity Matrix with an appropriate Hyperbolic Similarity Matrix, demonstrating improved efficiency compared to clustering in Euclidean Spaces. Our contributions include the development of the spectral clustering algorithm on Hyperbolic Spaces and the proof of its weak consistency. We show that our algorithm converges at least as fast as Spectral Clustering on Euclidean Spaces. To illustrate the efficacy of our approach, we present experimental results on the Wisconsin Breast Cancer Dataset, highlighting the superior performance of Hyperbolic Spectral Clustering over its Euclidean counterpart. This work opens up avenues for utilizing non-Euclidean Spaces in clustering algorithms, offering new perspectives for handling complex data structures and improving clustering efficiency.
\end{abstract}
\keywords{Hyperbolic Space, Spectral Clustering, Hierarchical Structure, Consistency}
\section{Introduction}

In the realm of machine learning, the pivotal process of categorizing data points into cohesive groups remains vital for uncovering patterns, extracting insights, and facilitating various applications such as customer segmentation, anomaly detection, and image understanding \cite{survey_1}. Among the paradigms of clustering algorithms \cite{fil}, alongside \textit{Partitional, Hierarchical,} and \textit{Density-based} techniques, \textit{Spectral Clustering} on Euclidean spaces has garnered extensive research attention \cite{tutorial}. Spectral clustering operates in the spectral domain, utilizing the eigenvalues and eigenvectors of the Laplacian of a similarity graph constructed from the data. This algorithm initially constructs a similarity graph, where nodes represent data points and edges indicate pairwise similarities or affinities among the data points. It then computes the graph Laplacian matrix, capturing the graph's structural properties and encoding relationships among the data points. Since its inception [See \cite{hoff} and \cite{fied}], the Euclidean version of Spectral Clustering has significantly evolved. For simple clustering with two labels, this method considers the eigenvector corresponding to the second smallest eigenvalue of a specific graph Laplacian constructed from the affinity matrix based on the spatial position of the sample data. It then performs the $2$-means clustering on the rows of the Eigenmatrix [whose columns consist of the eigenvectors corresponding to the smallest two eigenvalues], treating the rows as sample data points, and finally returns the cluster labels to the original dataset. This particular form of spectral clustering finds applications \cite{swamy} in Speech Separation \cite{bach}, Image Segmentation \cite{tung}, Text Mining \cite{dhillon}, VLSI design \cite{kahng}, and more. A comprehensive tutorial is also available at \cite{tutorial}. Here, we will briefly review the most commonly used Euclidean Spectral Clustering Algorithm.

When connectedness is a crucial criterion for the clustering algorithms, the conventional form of Spectral Clustering emerges as a highly effective approach. It transforms the standard data clustering problem in a given Euclidean space into a graph partitioning problem by representing each data point as a node in the graph. Subsequently, it determines the dataset labels by discerning the spectrum of the graph. Beginning with a set of data points $X:={x_1,x_2,...,x_N}\in\rr^l$. A symmetric similarity function (also known as the kernel function) $k_{i,j}:=k(x_i,x_j)$, along with the number of clusters $k$, we construct the similarity matrix $W:=w(i,j)=k_{ij}$. In its simplest form, Spectral Clustering treats the similarity matrix $W$ as a graph and aims to bipartite the graph to minimize the sum of weights across the edges of the two partitions. Mathematically, we try to solve an optimization problem \cite{tutorial} of the following form:
\begin{align} \label{eq:1} 
    \min_{U\in\rr^{N\times l}} C:=\min_{U\in\rr^{N\times l}} Tr(U^tL^\prime U) \hspace{0.5ex} \text{s.t. } \hspace{0.5ex} U^tU=I,
\end{align} where  $L:=D-W$ is the Graph Laplacian and the degree matrix  $D:=diag(d_1,d_2,,,,d_N), d_i:=\sum_{j=1}^Nw(i,j)$ and $L^\prime:=D^{-1/2}LD^{-1/2}=I-D^{-1/2}WD^{-1/2}$. $U$ is a label feature matrix, $l$ being the number of label features. Then, the simplest form of spectral clustering aims to minimize the trace by the feature matrix $U$ by considering the first $l$ eigenvectors of $L^\prime$ as its rows.

In order to extend the algorithm to form $k (>2)$ clusters, we consider the  $k$ eigenvectors $[u_1,u_2,...,u_k]\in\rr^N$ corresponding to the $k$ smallest eigenvalues of $\Tilde{L}$. Then, treating the $N$ rows of the Eigenmatrix $U:=[u_1,u_2,...,u_k]$ as sample data points in $\rr^k$, we perform the $k-$means clustering and label them. Finally, we return the labels to the original data points in the order they were taken to construct the degree matrix $W$. Some of the variants of this algorithm can be found at \cite{tutorial}. 

\begin{figure*}[ht]
    \centering
    \begin{subfigure}[b]{0.3\textwidth}
         \centering
         \includegraphics[width=\textwidth]{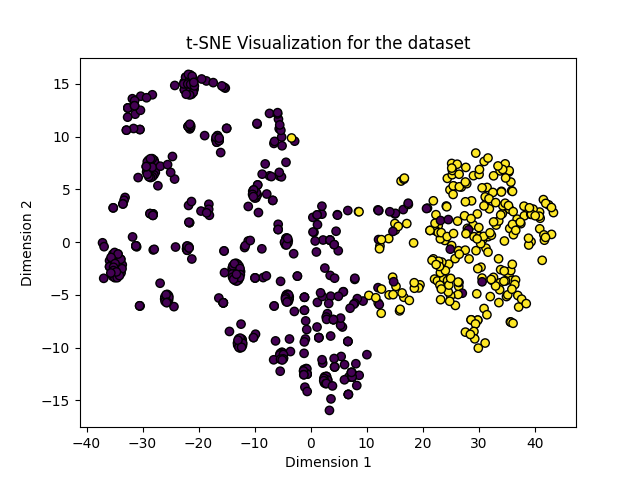}
         \caption{Dataset WBCD}
         \label{fig:dataset_wisc_0}
     \end{subfigure}
     \hfill
      \begin{subfigure}[b]{0.3\textwidth}
         \centering
         \includegraphics[width=\textwidth]{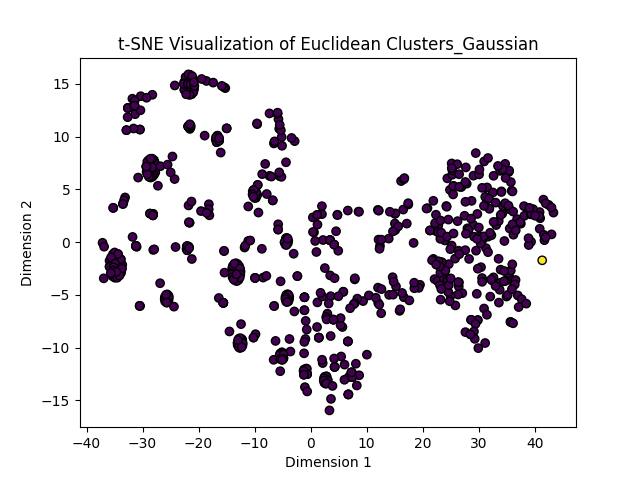}
         \caption{WBCD-ESCA Clusters}
         \label{fig:euclid_wisc_0}
     \end{subfigure}
     \hfill
      \begin{subfigure}[b]{0.3\textwidth}
         \centering
         \includegraphics[width=\textwidth]{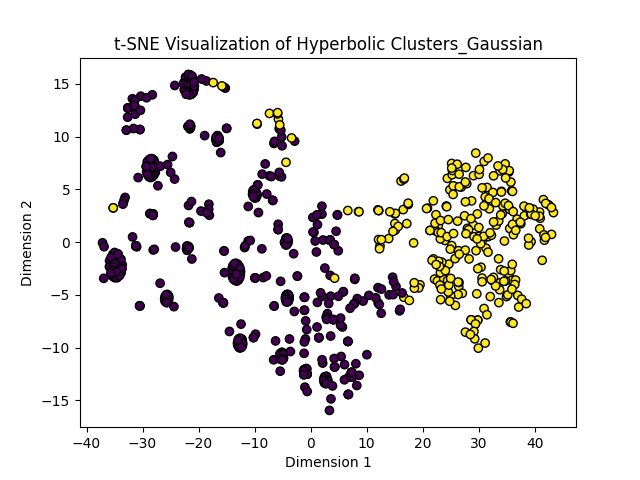}
         \caption{WBCD-HSCA Clusters}
         \label{fig:hyp_wisc_0}
     \end{subfigure}
\caption{Consider the Wisconsin Breast Cancer Dataset taken from the UCL Machine Learning Repository. The leftmost figure describes t-SNE visualization of the clusters of two types of tumors: malignant(yellow dots) and benign(brown dots). Due to the presence of one predominant connected component, Euclidean spectral clustering forms only one cluster (with only one isolated point in the other cluster), whereas hyperbolic spectral clustering forms the clusters after separating two hierarchies present in the data. Clearly, the hyperbolic clusters provide much more accurate description of the dataset over its Euclidean counterpart.} \label{fig:10}
\end{figure*}

This form of Euclidean Spectral Clustering works well on Euclidean datasets, which are relatively simple and have some distinct connectedness. But as data evolves rapidly and takes extremely complex shapes, Euclidean space becomes a more inefficient model space in capturing the intricate patterns in the data and thereby helping learning algorithms in that set-up. Thus, visualizing data from a more complex topological perspective and developing unsupervised learning techniques on these complex shapes can have a consequential impact on revealing the true nature of the data. Although Deep Neural Networks on hyperbolic spaces have recently gained some attention and found applications in various computer vision problems [see \cite{peng}, \cite{ganea}, \cite{chen}, \cite{chami}], clustering algorithms or non-deep machine learning models on non-Euclidean Spaces are one of the least explored domains. In this paper, we scrutinize this domain by proposing a spectral clustering algorithm on hyperbolic spaces. While Euclidean Space fails to provide meaningful information and lacks its representational power in implementing arbitrary tree/graph-like structures or hierarchical data,  that can not be embedded in Euclidean Spaces with arbitrary low dimensions, and at the same time, these spaces do not preserve the associated metric (similarity measure/dissimilarity measure), hyperbolic spaces can be beneficial to implement the structure obtained from these examples [see \cite{miao}, \cite{mondal}] and even some hyperbolic spaces with shallow dimension can be very powerful to represent these data and clustering on these spaces can result in far better efficiency compared to the clustering in Euclidean Spaces like Figure \ref{fig:10}. Our main contributions to this paper are:
\begin{itemize}
    \item We propose the spectral clustering algorithm on hyperbolic spaces in which an appropriate hyperbolic similarity matrix replaces the Euclidean similarity matrix.
    \item We also provide a theoretical analysis concerning the weak consistency of the algorithm and prove that it converges (in the sense of distribution) at least as fast as the spectral clustering on Euclidean spaces. 
    \item Advancing a step further, we present the hyperbolic versions of some of the well-known variants of Euclidean spectral clustering, such as Fast Spectral Clustering with approximate eigenvectors (FastESC) \cite{he}  and the Approximate Spectral Clustering with $k-$ means-based Landmark Selection \cite{chencai}. 
\end{itemize}

Having said that, we organize the rest of our paper in the following way. In Section \ref{sec:2}, we will give a brief overview of the works related to the Euclidean Spectral Clustering and its variants. We will also discuss why we need to consider a hyperbolic version of the Euclidean Spectral Clustering and some of its variants. Section \ref{sec:3} lays out the mathematical backgrounds pertaining to our proposed algorithm. We will discuss several results which will enable us to formulate the algorithm in a rigorous manner. We give the details of our proposed algorithm in Section \ref{sec:4}. We discuss the motivation behind the steps related to our algorithm. Section \ref{sec:5} has been dedicated for the proof of the weak consistency of the proposed algorithm. Section \ref{sec:6} presents and discusses the experimental results. Finally, conclusions are drawn in Section \ref{sec:7}.

\section{Related Works}\label{sec:2}

We commence with a brief overview of prominent variants of Euclidean spectral clustering:

\begin{enumerate}
\item \textbf{Bipartite Spectral Clustering on Graphs (ESCG)}: Introduced by Liu et al. \cite{jliu} in 2013, this algorithm primarily aims to reduce the time complexity during spectral decomposition of the affinity matrix by appropriately transforming the input similarity matrix of a Graph dataset. The method involves randomly selecting $d (\ll n)$ seeds from a given Graph of input size $n$, followed by generating $d$ supernodes using Dijkstra's Algorithm to find the shortest distance from the Graph nodes to the seeds. This process reduces the size of the similarity matrix $\Tilde{W}:=RW$, where $R$ is the indicator matrix of size $d \times n$ and $W$ is the original affinity matrix. Subsequently, it proceeds with spectral decomposition of the normalized $Z:=D_2^{-1/2}\Tilde{W}D_1^{-1/2}$, where $D_1$ and $D_2$ are diagonal matrices containing the column and row sums of $\Tilde{W}$, respectively. The algorithm computes the $k$ largest eigenvectors of $ZZ^t$ and generates $k$ clusters based on the $k$-means algorithm on the matrix $U:=D_1^{-1/2}X$, where $X$ is the right singular matrix in the singular value decomposition of $Z$.
\item \textbf{Fast Spectral Clustering with approximate eigenvectors (FastESC)}: Developed by He et al. \cite{he} in 2019, this algorithm initially performs $k$-means clustering on the dataset with a number of clusters greater than the true clusters and then conducts spectral clustering on the centroids obtained from the $k$-means. Similar to ESCG, this algorithm also focuses on reducing the size of the input similarity matrix for spectral clustering.

\item \textbf{Low Rank Representation Clustering (LRR)}: Assuming a lower-rank representation of the dataset $X:=[x_1,x_2,...,x_n]$, where each $x_i$ is the $i$-th data vector in $\mathbb{R}^D$, this algorithm aims to solve an optimization problem to minimize the rank of a matrix $Z$ subject to $X=AZ$. Here, $A=[a_1,a_2,...,a_m]$ is a dictionary, and $Z:=[z_1,z_2,...,z_n]$ is the coefficient matrix representing $x_i$ in a lower-dimensional subspace. The algorithm iteratively updates $Z$ and an error matrix $E$ as proposed by Liu et al. in \cite{liu}.

\end{enumerate}

Among other variants of Spectral Clustering, such as Ultra-Scalable Spectral Clustering Algorithm (U-SPEC) \cite{wulai} or Constrained Laplacian Rank Clustering (CLR) \cite{jordan}, the primary objective remains consistent - to enhance efficiency by reducing the burden of the spectral decomposition step through minimizing the size of the input similarity matrix. However, there has been minimal exploration regarding the translation of these algorithms into a hyperbolic setup. In this context, this marks the initial attempt to elevate non-deep machine learning algorithms beyond Euclidean Spaces.

Consider a dataset with a hierarchical structure, represented as a top-to-bottom tree. As we traverse along nodes away from the root, the distance among nodes at the same depth but with different parents grows exponentially with respect to their heights. This exponential growth renders Euclidean Space unsuitable as a model space for representing the hierarchy. Hyperbolic spaces are better suited for this purpose, where distance grows exponentially as we move away from the origin/root node. Some of these issues have been addressed in a series of papers \cite{peng}, \cite{ganea}, \cite{chami} in the context of Deep Neural Networks. Recently in context to downstream self-supervised tasks, the authors in \cite {long} presented scalable Hyperbolic Hierarchical Clustering (sHHC), which learns continuous hierarchies in hyperbolic space. Leveraging these hierarchies from sound and vision data, hierarchical pseudo-labels are constructed, leading to competitive performance in activity recognition compared to recent self-supervised learning models. In this work, we propose a general-purpose spectral clustering algorithm on a chosen hyperbolic space after embedding the original Euclidean dataset into the space in a particular way that minimally perturbs the inherent hierarchy. 

\section{Mathematical Preliminaries} 
For detailed exposure to the theoretical treatment, refer to \cite{carmo}, \cite{tu}, and \cite{lang}.
\label{sec:3}
\subsection{Smooth Manifold}
\subsubsection*{\textbf{Manifolds:}} A topological space $M$ is locally Euclidean of dimension n, if for every $p\in M$, there exists a neighbourhood $U$ and a map $\phi$ such that $\phi:U\to \phi(U)$, $\phi(U)  $ being open in $\rr^n$, is a homeomorphism. We call such a pair ($U,\phi:U\to \phi(U)$) a chart or co-ordinate system on $U$. An $n$ dimensional Topological Manifold is a Hausdorff, Second Countable and Locally Euclidean of dimension $n$ [see \cite{tu}]. 

\subsubsection*{\textbf{Charts and Atlas:}} We say two charts $(U,\phi)$ and $(V,\psi)$ to be $C^{\infty}$ compatible if and only if $\phi\circ{\psi}^{-1}:\psi(U\cup V)\to \phi(U\cup V)$ and $\psi\circ\ {\phi}^{-1}: \phi(U\cup V)\to \psi(U\cup V)$ are $C^{\infty}$ maps between Euclidean Spaces. A $C^\infty$ Atlas or simply an Atlas on a Euclidean Space $M$ is a collection $\mathcal{A}:=\{(U_{\alpha}, \phi_{\alpha})\}_{\alpha\in I}$ such that $U_{\alpha}$'s are pairwise $C^\infty$ compatible and $M=\cup_{\alpha\in I}U_\alpha$. It is worth noting that any Atlas is Contained in a unique Maximal Atlas. 

\subsubsection*{\textbf{Smooth Manifold:}} A smooth or $C^\infty$ manifold $M$ is a topological manifold equipped with a maximal atlas. Throughout this article, we will exclusively refer to a smooth manifold when discussing manifolds.

Consider a point $p\in \mathbb{R}^n$. If two functions coincide on certain neighborhoods around $p$, their directional derivatives will also be identical. This allows us to establish an equivalence relation among all $C^\infty$ functions within some neighborhood of $p$. Let $U$ be a neighborhood of $p$, and consider all $f:U\to \mathbb{R}$ where $f$ is $C^\infty$. We define two functions $(f,U)$ and $(g,V)$ as equivalent if there exists a neighborhood $W\subseteq U\cap V$ such that $f=g$ when restricted to $W$. This equivalence relation holds for all $C^\infty$ functions at $p$, forming what is known as the set of \textbf{germs of $C^\infty$ functions on $\mathbb{R}^n$ at $p$}, denoted as $C_p^{\infty}(\mathbb{R}^n)$. Similarly, this notion extends to manifolds through charts, where $C_p^{\infty}(M)$ represents the germs of all $C^\infty$ functions on $M$ at $p$.

\subsubsection*{\textbf{Derivation and Tangent Space:}} A Derivation $\dd$ on $M$ is a linear map:
\begin{align*}
    \dd:C_p^{\infty}(M)\to \rr,
\end{align*} satisfying the Libneiz's Rule:
\begin{align*}
    \dd(fg) = \dd(f)g(p)+f(p)\dd(g).
\end{align*} A tangent vector at $p$ is a Derivation at $p$. The vector space of all tangent vectors at $p$ is denoted as $T_p(M)$. 

Let $(U,\phi)$ be a chart around $p$. Let $\phi = (x^1, x^2, ...,x^n)$. Let $r^1, r^2, ..., r^n$ be the standard coordinates on $\rr^n$. Then $x^i = r^i\circ \phi:U\to \rr$. If $f\in C_p^{\infty}(M)$, then $\frac{\pp}{\pp x^i}|_p(f):= \frac{\pp}{\pp r^i}|_{\phi(p)}(f\circ \phi^{-1})$. If $v\in T_p(M)$, $v = \sum_{i=1}^{n}a_i\frac{\pp}{\pp x^i}$.

\subsubsection*{\textbf{Vector Fields:}} A Vector Field on an open subset $U$ of $M$ is a function that assigns to each point $p\in U$ to a tangent vector $X_p$ in $T_p(M)$. Since $\{\frac{\pp}{\pp x^i}|_p\}_{1\leq i\leq n}$ form a local basis of $T_p(M)$, $X_p = \sum_{i=1}^{n}a_i(p)\frac{\pp}{\pp x^i}$.

\subsubsection*{\textbf{Cotangent Space and Differential Forms:}} Let us also talk briefly about the dual of Tangent Space, also known as the Cotangent Space. It consists of the space of all covectors or linear functionals on the Tangent Space $T_p(M)$ and is denoted by $T^*_p(M)$. Similarly, here we can define Covector Fields or Differential 1-Form on an open set $U$ of $M$ that assigns to each point $p$, a covector $\omega_p\in T^*_p(M)$. Staying consistent with the notation, $\{dx^i|_p\}_{1\leq i\leq n}$ forms a local basis of Cotangent Space dual to $\{\frac{\pp}{\pp x^i}|_p\}_{1\leq i\leq n}$, i.e. 
\begin{align*}
    dx^i|_P\left(\left\{\frac{\pp}{\pp x^j}|_p\right\}\right) = \delta^i_j,
\end{align*} where $\delta^i_j = 1$ when $i=j$ and is $0$ otherwise.
\vspace{-1mm}
\subsubsection*{\textbf{Tensor Product:}} If $f$ is a $k$-linear function and $g$ is a $l$-linear function on a vector space $V$, their tensor product is defined to be the $(k+l)$ linear function $f\otimes g$ as
\begin{align*}
    f\otimes g(v_1, v_2, ..., v_{k+l}) = f(v_1, ..., v_k)g(v_{k+1}, ..., v_{k+l}), 
\end{align*} where $v_1, ..., v_{k+l}$ are arbitrary vectors in $V$.

\subsection{Riemannian Manifold}
\subsubsection*{\textbf{Riemannian Manifolds:}} A Riemannian Manifold $M$ is a smooth manifold equipped with a smooth Riemannian Metric \cite{carmo}, i.e. a $C^\infty$ map $g:M\to T^*M^{\otimes 2}$, i.e. $\forall p\in M$, $g_p\in T^*_pM^{\otimes 2} $, i.e. $g_p$ is a bilinear functional on $T_p(M)$. [$g_p: T_p(M)\times T_p(M) \to \rr$] If $\phi = (x^1,x^2,..., x^n)$ is a chart on $U\subseteq M$, taking $e_i = \frac{\pp}{\pp x^i}$ and $e^*_i = dx^i$, 
\begin{align*}
    g = \sum_{i,j}g(e_i,e_j)e^*_i\otimes e^*_j.
\end{align*}

Now we state an elementary result from the Riemannian Geometry without proof. 
\subsubsection*{\textbf{Proposition 3.1:} }\label{prop:3.1} Every smooth manifold admits a Riemannian Metric.

\subsubsection*{\textbf{Corollary 1:}} The Usual Spectral Clustering Algorithm on Euclidean Spaces can be applied to Riemannian Manifolds with the Euclidean Metric replaced by Riemannian Metric, at least locally. In particular it can be applied to compact subsets of model hyperbolic spaces with the corresponding hyperbolic metric.

\subsubsection*{\textbf{Length and Distance:}} Let $\gamma:[a,b]\to (M,g)$ be a piecewise smooth curve. The length of $\gamma$, $L(\gamma)$ is defined as, $L(\gamma): = \int_{a}^{b}g_{\gamma(t)}(\gamma^\prime(t), \gamma^\prime(t))^{1/2}$. Given two points $p,q\in M$, we define the distance between $p$ and $q$ as: $d_g(p,q):=\inf\{L(\gamma)$ such that $\gamma$ is a peicewise smooth curve and $\gamma(a)=p, \gamma(b)=q\}$. This distance is also known as the \emph{Geodesic Distance} between $p$ and $q$ on $M$.\\


\textbf{Proposition 3.2:}
    Let $M$ be a Riemannian Manifold with Riemannian metric $g$. Then $M$ is a metric space with the distance function $d_g$ defined as above. The metric topology on $M$ coincides with the manifold topology.\\

Let $\chi(M)$ denote the space of all $C^\infty$ vector fields on $M$. Let $X\in \chi(M)$. Then $X:M\to TM\xhookrightarrow{}\rr^n\times \rr^n$. $X(p) = (p,v)$. Considering only second component of a vector field, $X:M\subseteq \rr^n\to\rr^n$. Hence $dX_p:T_pM\to\rr^n$. Given $v\in T_pM$, we define the covariant derivative of $X$ in the direction of $v$ by $\nabla_vX = dX_p(v)^t\in T_pM$. For $Y\in \chi(M)$, $\nabla_YX$ is defined as $(\nabla_YX)(p) = \nabla_{Y(p)}X, p\in M$. This defines a map $\nabla : \chi(M)\times \chi(M)\to \chi(M)$ as
\begin{enumerate}
    \item[(i)] $\nabla_{fX+gY}Z = f\nabla_XZ+g\nabla_YZ, \forall X,Y,Z\in \chi(M)$ and $f,g\in C^\infty(M)$.
    \item[(ii)] $\nabla_X(\alpha Y+\beta Z) = \alpha\nabla_XY+\beta\nabla_XZ, \forall \alpha, \beta\in \rr$ and $X,Y<Z\in \chi(M)$.
    \item[(iii)] $\nabla_{X}(fY) = f\nabla_XY+(Xf)Y, \forall X<Y\in \chi(M)$ and $f\in C^\infty(M)$.
\end{enumerate}
This $\nabla$ is called an induced connection on $M$ and it satisfies two properties:
\begin{enumerate}
    \item (Symmetry) $\nabla_XY-\nabla_YX = [X,Y] = XY-YX$.
    \item (Metric Compatibility) $X<Y,Z> = <\nabla_XY,Z>+<Y,\nabla_XZ>, \forall X,Y,Z\in \chi(M)$.
\end{enumerate}
This $\nabla$ on $(M,g)$ is called the \emph{Levi-Civita Connection} or the \emph{Riemannian Connection}. It can be proved that there exists a unique such connection on a given Riemannian Manifold $(M,g)$ satisfying symmetry and metric compatibility.

\subsubsection*{\textbf{Curvature Operator:}} The curvature operator on $M$ is defined as the $\rr$-trilinear map:\\
$R: \chi(M) \times \chi(M)\times \chi(M) \to \chi(M)$ as 
$(X,Y,Z)\to R(X,Y)Z:= \nabla_Y\nabla_XZ-\nabla_X\nabla_YZ+\nabla_{[X,Y]}Z$.

\subsubsection*{\textbf{Sectional Curvature:}} Let $p\in M$ and $\sigma\subset T_pM$ be a two dimensional subspace. The curvature of $M$ in the direction of $\sigma$ is defined as 
\begin{align}
    \kappa(p,\sigma):=\frac{<R(x,y)x,y>}{|x\wedge y|^2},
\end{align} where 
\begin{align}
    |x\wedge y|^2 = \|x\|^2\|y\|^2-|<x,y>|^2.
\end{align}

\subsubsection*{\textbf{Hyperbolic Spaces:}} An $n$-dimensional hyperbolic space is characterized as the singularly connected, $n$-dimensional complete Riemannian Manifold with a constant sectional curvature of $-1$. Represented as $\mathbb{H}^n$, it stands as the exclusive simply connected $n$-dimensional complete Riemannian Manifold with a constant negative sectional curvature of $-1$. The renowned Killing-Hopf Theorem \cite{lee} affirms that any two such Riemannian Manifolds are isometrically equivalent. While numerous isometric model spaces exist, we will only touch upon four of them briefly. The geodesics on these spaces are illustrated in Figure \ref{fig:Figure 1}.

\begin{enumerate}
    \item \textbf{Poincar\'{e} Half Space Model:}  This is the upper half plane $H:=\{(x_1,x_2,...,x_n)|x_n>0, x_i\in \rr \forall i\in\{1,2,...,n\}\}$. The metric on this space is defined as follows:
suppose $p_1:=(x_1,x_2,...,x_n)$ and $p_2:=(y_1,y_2,...,y_n)$ are two points whose distance we want to compute. Let $\Tilde{p_1}:={x_1,x_2,...,x_{n-1},-x_n}$ be the reflection of $p_1$ with respect to the plane $x_n=0$. Then 
\begin{align}
    d(p_1,p_2)&:=2 \sinh^{-1}\frac{\|p_2-p_1\|}{2\sqrt{x_ny_n}}\\
    &= 2\log\frac{\|p_2-p_1\|+\|p_2-\Tilde{p_1}\|}{2\sqrt{x_ny_n}}
\end{align}

    \item \textbf{Poincar\'{e} Disc Model:} This is a model hyperbolic space in which all points are inside the unit ball in $\rr^n$ and geodesics are either diameters or circular arcs perpendicular to the unit sphere. The metric between two points $p_1$ and $p_2$ ($\|p_1\|,\|p_2\|<1$)is defined as
\begin{align*}
    d(p_1,p_2):=&\\
    &\cosh^{-1}\left(1+\frac{2\|p_2-p_1\|^2}{(1-\|p_1\|^2)(1-\|p_2\|^2)}\right).
\end{align*}

    \item \textbf{Beltrami-Klein Model:} This is a model hyperbolic space in which points are represented by the points in the open unit disc in $\rr^n$ and lines are represented by chords, special type of straight lines with ideal end points on the unit sphere. For $u,v\in B$, the open unit ball, the distance in this model is given as 
    \begin{align}
        d(u,v):=\frac{1}{2}\log\left(\frac{\|aq\|\|pb\|}{\|ap\|\|qb\|}\right),
    \end{align}
    where $a,b$ are ideal points on the boundary sphere of the line $(p,q)$ with $\|aq\|>\|ap\|$ and $\|pb\|>\|bq\|$. 
    \item \textbf{Hyperboloid Model:} This is a model hyperbolic space also known as Minkowski Model which is the forward sheet $S^{+}$ of the two hyperbolic sheets embedded in the $(n+1)$ dimensional Minkowski Space. The hyperbolic distance between two points $u=(x_0,x_1,...,x_n)$ and $v=(y_0,y_1,...,y_n)$ is given as
    \begin{align}
        d(u,v):=\cosh^{-1}(-B(u,v)),
    \end{align} where
    \begin{align*}
        B((x_0,x_1,...,x_n),(y_0,y_1,...,y_n))\\
        =-x_0y_0+\sum_{i=1}^nx_iy_i.
    \end{align*} Note that $B$ is simply the Minkowski Dot Product. 

\end{enumerate}

\subsubsection*{\textbf{Gyrovector Space:}} 


The concept of Gyrovector Space, introduced by Abraham A. Ungar [see \cite{ungar}], serves as a framework for studying vector space structures within Hyperbolic Space. This abstraction allows for the definition of special addition and scalar multiplications based on weakly associative gyrogroups. For a detailed geometric formalism of these operations, Vermeer's work \cite{vermeer} provides an in-depth exploration.

In this context, we will briefly discuss M$\ddot{o}$bius Gyrovector Addition and Mobius Scalar Multiplication on the Poincar'{e} Disc. Due to isometric transformations between hyperbolic spaces of different dimensions, the same additive and multiplicative structures can be obtained for other model hyperbolic spaces [see \cite{ratcl}]. Utilizing M$\ddot{o}$bius addition and multiplication is essential when evaluating intrinsic metrics like the Davies-Bouldin Score or Calinski-Harabasz Index to assess the performance of our proposed algorithm.

\begin{enumerate}
    \item \textbf{M$\ddot{o}$bius Addition:} For two points $u$ and $v$ in the Poincar\'{e} Disc, the M$\ddot{o}$bius addition is defined as:
    $
        u\bigoplus_K v:=
        \frac{(1+2K<u,v>+K\|v\|^2)u+(1-K\|u\|^2)v}{1+2K<u,v>+K^2\|u\|^2\|v\|^2},$
     where $K$ is the curvature and for hyperbolic spaces, $K=-1$.  

    \item \textbf{M$\ddot{o}$bius Scalar Multiplication:} For $r\in\rr$, $c>0$ and $u$ in the Poincar\'{e} Disc, the scalar multiplication is defined as:
    $
        r\bigotimes_c u:= \frac{1}{\sqrt{c}} \tanh\left(r \tanh^{-1}(\sqrt{c}\|u\|)\right)\frac{u}{\|u\|}
    $
    This addition  and scalar multiplication satisfy the axioms pertaining to the Gyrovector Group [see \cite{ungar}].
\end{enumerate}

\begin{figure}
     \centering
     \begin{subfigure}[b]{0.2\textwidth}
         \centering
         \includegraphics[width=\textwidth]{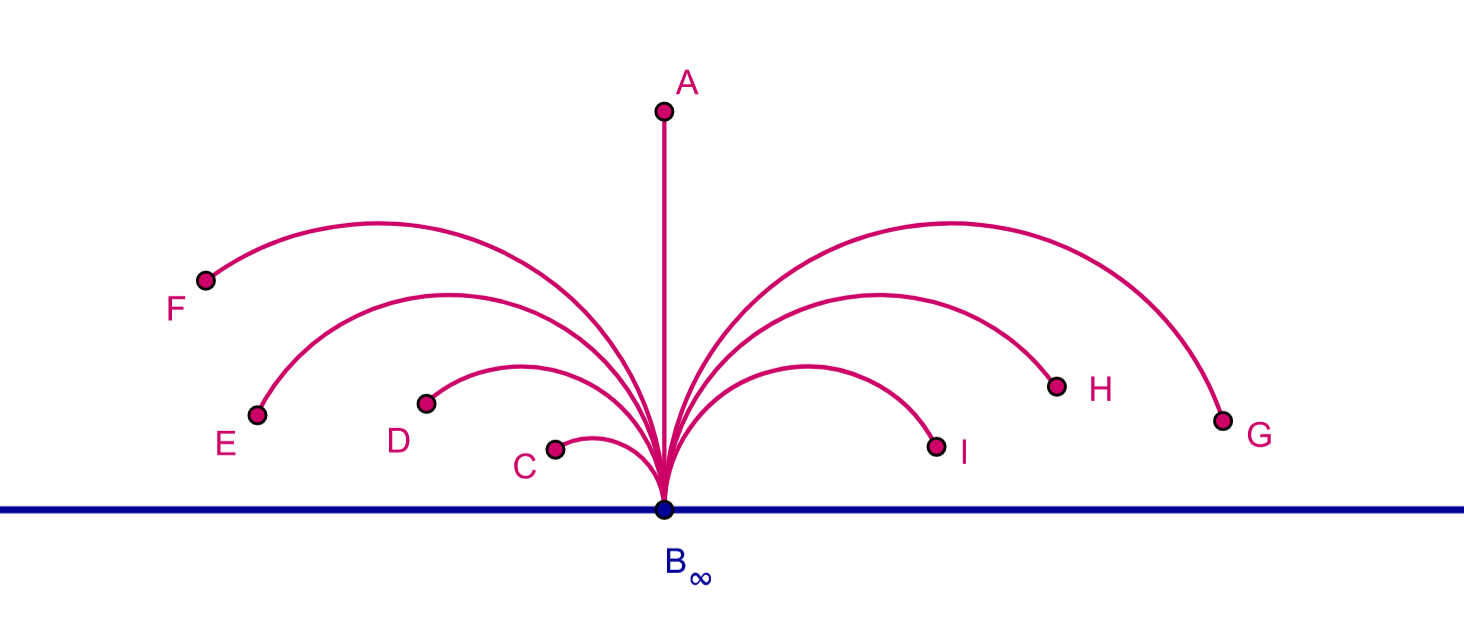}
         \caption{Geodesics in Poincare Half Space Model}
         \label{fig: poincare half space}
     \end{subfigure}
     \hfill
     \begin{subfigure}[b]{0.2\textwidth}
         \centering
         \includegraphics[width=\textwidth]{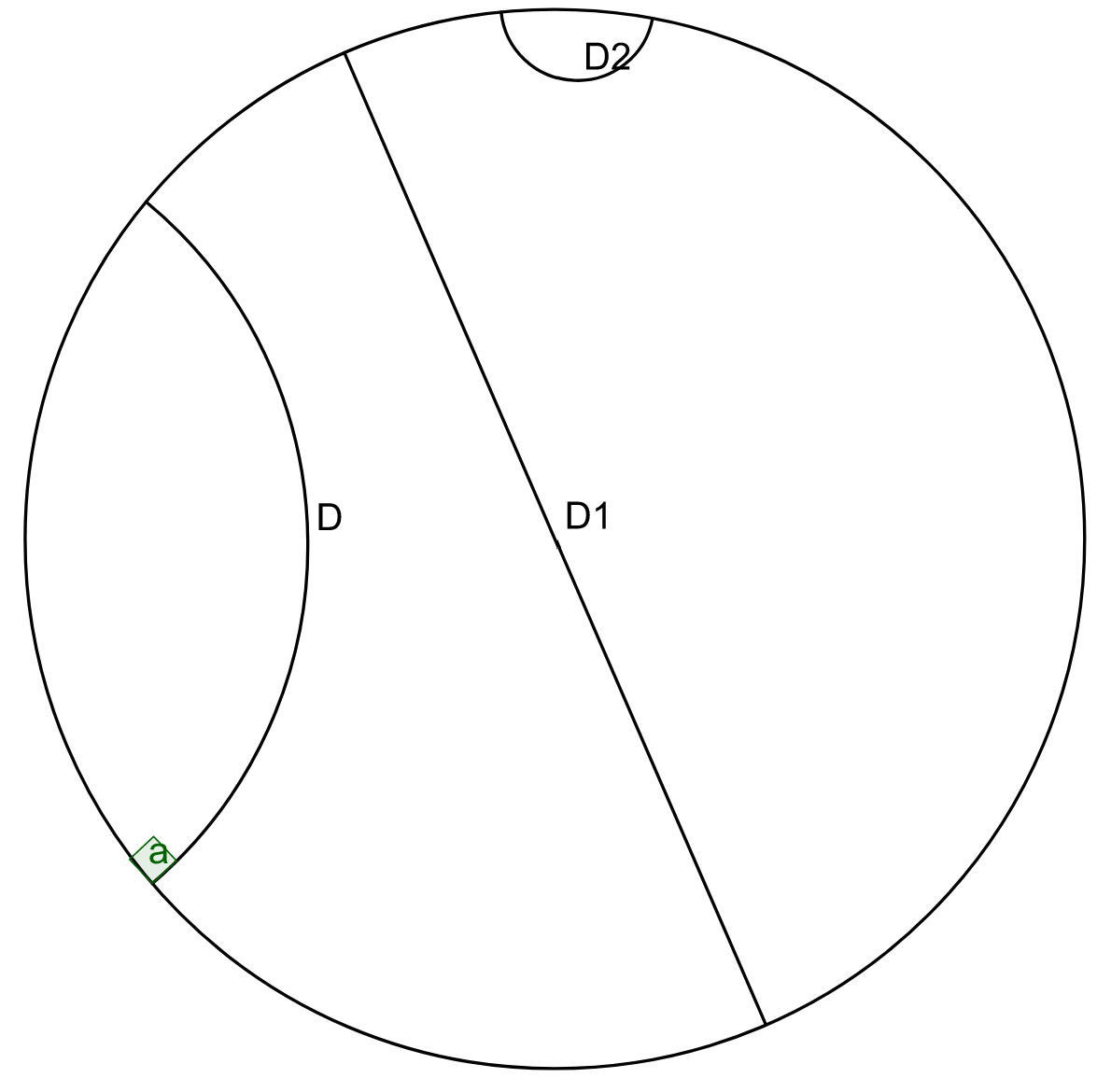}
         \caption{Geodesics in Poincare Disc Model}
         \label{fig: poincare disc}
     \end{subfigure}
     \hfill
     \begin{subfigure}[b]{0.2\textwidth}
         \centering
         \includegraphics[width=\textwidth]{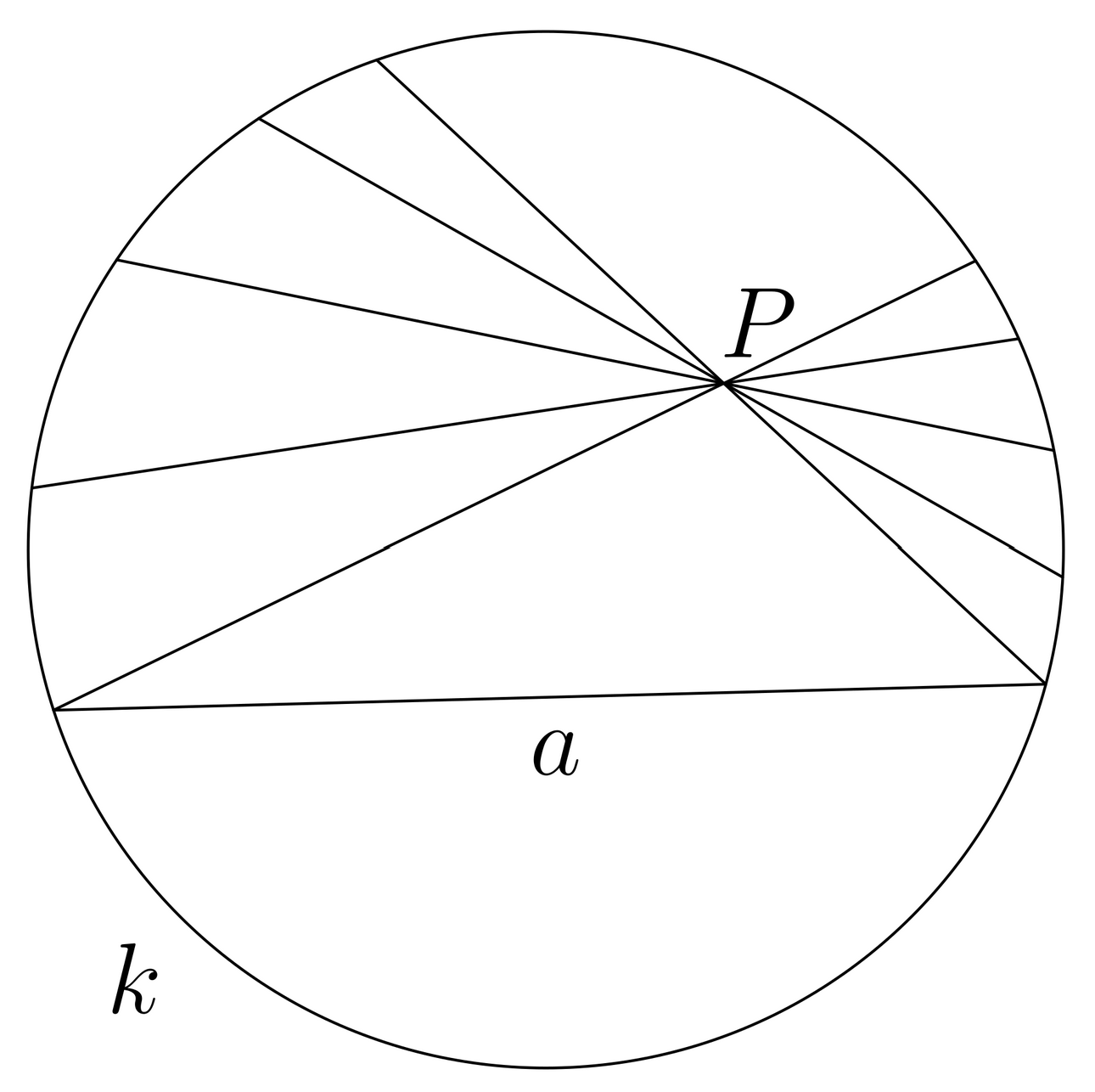}
         \caption{Geodesics in Klein-Beltrami Model}
         \label{fig: Klein-Beltrami Model}
     \end{subfigure}
     \hfill
     \begin{subfigure}[b]{0.2\textwidth}
         \centering
         \includegraphics[width=\textwidth]{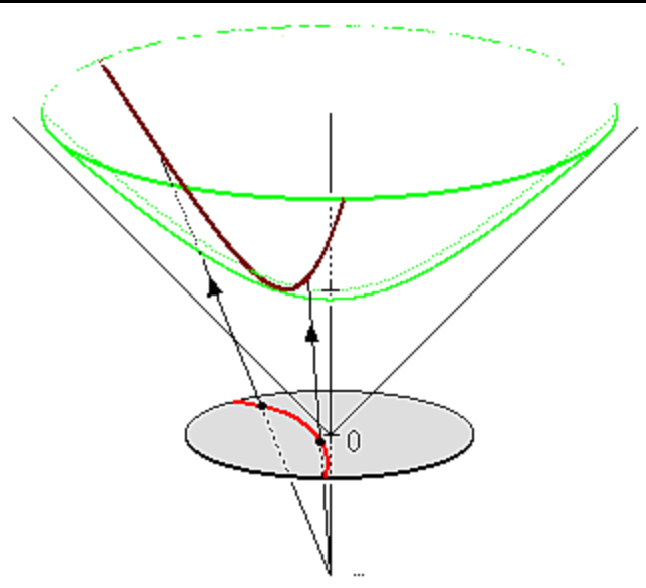}
         \caption{Geodesics in Minkowski Hyperboloid Model}
         \label{fig: Hyperboloid Model}
     \end{subfigure}
     
        \caption{Geodesics in Different Model Hyperbolic Spaces}
        \label{fig:Figure 1}
\end{figure}

\section{Proposed Algorithm}\label{sec:4}
Before discussing the proposed spectral clustering algorithm, we will briefly talk about an equivalent formulation of the Euclidean Spectral Clustering algorithm. In the Introduction \ref{sec:introduction} section, we described the spectral clustering problem as trace minimization problem. In its simplest form, if the two clusters are $A$ and $B$, then solving \ref{eq:1} is equivalent to minimize the normalized cut (N-Cut) problem
\begin{align} \label{eq:2}
    Ncut(A,B):=\frac{Cut(A,B)}{Vol(A)}+\frac{Cut(A,B)}{Vol(B)},
\end{align}
where $Cut(A,B):=\sum_{i\in A, j\in B}W(i,j)$ and $Vol(A):=\sum_{i\in A, j\in V}W(i,j), Vol(B):=\sum_{i\in B, j\in V}W(i,j)$. But \ref{eq:2} is an NP-hard problem in general. To ease this problem, we tend to solve another equivalent minimization problem
\begin{align} \label{eq:3}
    \min J_{y : y_i\in\{\frac{1}{Vol(A)}, \frac{-1}{Vol(B)}\} }:=\frac{y^tLy}{y^tDy} \hspace{0.5ex} \text{subject to} \hspace{0.5ex} y^tD\bf{1}=0. 
\end{align}
Here we choose $y\in \rr^N$ such that $y_i:=\begin{cases}
    \frac{1}{Vol(A)} , \text{if} \hspace{2ex} i\in A\\
    \frac{-1}{Vol(B)} , \text{if} \hspace{2ex} i\in B. 
\end{cases}$\\

In order to solve \ref{eq:3} we construct the Normalized Graph Laplacians as 
\begin{align*}
    L^\prime:=D^{-1/2}LD^{-1/2}=I-D^{-1/2}WD^{-1/2}
\end{align*} \begin{center}
    or
\end{center}
\begin{align*}
    L^{\prime \prime}:=D^{-1}L=I-D^{-1}W
\end{align*}
and then converting \ref{eq:3} to 
\begin{align}\label{eq:4}
    \min_{z: z^tD^{1/2}\bf{1}=0} J:=\frac{z^t\Tilde{L}z}{z^tz},
\end{align}
where $\Tilde{L}$ is either $L^\prime$ or $L^{\prime \prime}$. Then solving the generalized eigenvalue problem of the form $Lf=\lambda Df$ of \ref{eq:3} is transformed into an eigenvalue problem of the form $\Tilde{L}f=\lambda f$. Since the smallest eigenvalue of $\Tilde{L}$ is $0$, the Rayleigh Quotient \ref{eq:4} is minimized at the eigenvector corresponding to the next smallest eigenvalue. More generally, if the number of clusters is $k$, we have to choose the eigenvectors corresponding to the $k$ smallest eigenvalues. 

Having all these preliminaries and the equivalent formulations, we are now ready to discuss our proposed algorithm. At first, we need to identify carefully the appropriate model space on which we will consider our data points. Since the hyperbolic space of dimension $n$ is not bounded, if we consider data points spreading across the entire space, we might run into problems to verify its consistency. To remove this shortcoming, we will consider only a compact subset of our model space chosen carefully. 

Since the boundary of the model hyperbolic space $\hp^n$ is at infinite distance from the origin (for example, the plane $x_n=0$ in Poincar\'{e} half space model or the unit circle in Poincar\'{e} disc model), we have to consider only a compact subset of $\hp^n$ in order to talk about general notion of convergence. More specifically, we will fix some $\delta>0$ and consider the following subsets of $\hp^n$ for either Poincar\'{e} half space model or disc model:
\begin{itemize}
    \item \textbf{Poincar\'{e} Half Space Model:} Consider $H^\prime:=\{(x_1,x_2,...,x_n)|x_n\geq\delta\}$.
    \item \textbf{Poincar\'{e} Disc Model:} Consider $H^":=\{x\in \dd^n \hspace{2ex}\text{such that} \hspace{2ex}\|x\|\leq (1-\delta)\}$.
\end{itemize}
Note that if we choose $\delta>0$ sufficiently small, then the clusters will not be affected by the choice of $\delta$. Since the formation of clusters depends only on the spatial position of the data points among themselves, choosing a bigger $\delta$ only affect in terms of scaling. However, in order to utilize the hyperbolic nature of $\hp^n$ we will choose $\delta$ to be very small, let's say in the order of $10^{-4}$. 

Since every $n-$ dimensional model hyperbolic space is isometric to each other by the \textbf{Killing-Hopf Theorem}, for the purpose of convergence, it is reasonable to talk about a compact subset of only one of the model hyperbolic spaces. For our purpose, we will only consider the above mentioned compact subset of the Poincar\'{e} Disc Model and denote it by $H$, more explicitly, $H:\{x\in\dd^n \hspace{1ex} \textit{with}\hspace{1ex} \|x\|\leq(1-\delta) \}$ equipped with the Poincar\'{e} metric
\begin{align*}
    d(x_1,x_2):=cosh^{-1}\left(1+\frac{2\|x_2-x_1\|^2}{(1-\|x_1\|^2)(1-\|x_2\|^2)}\right).
\end{align*}

\begin{figure}
     \centering
         \centering
         \includegraphics[width=0.5\textwidth]{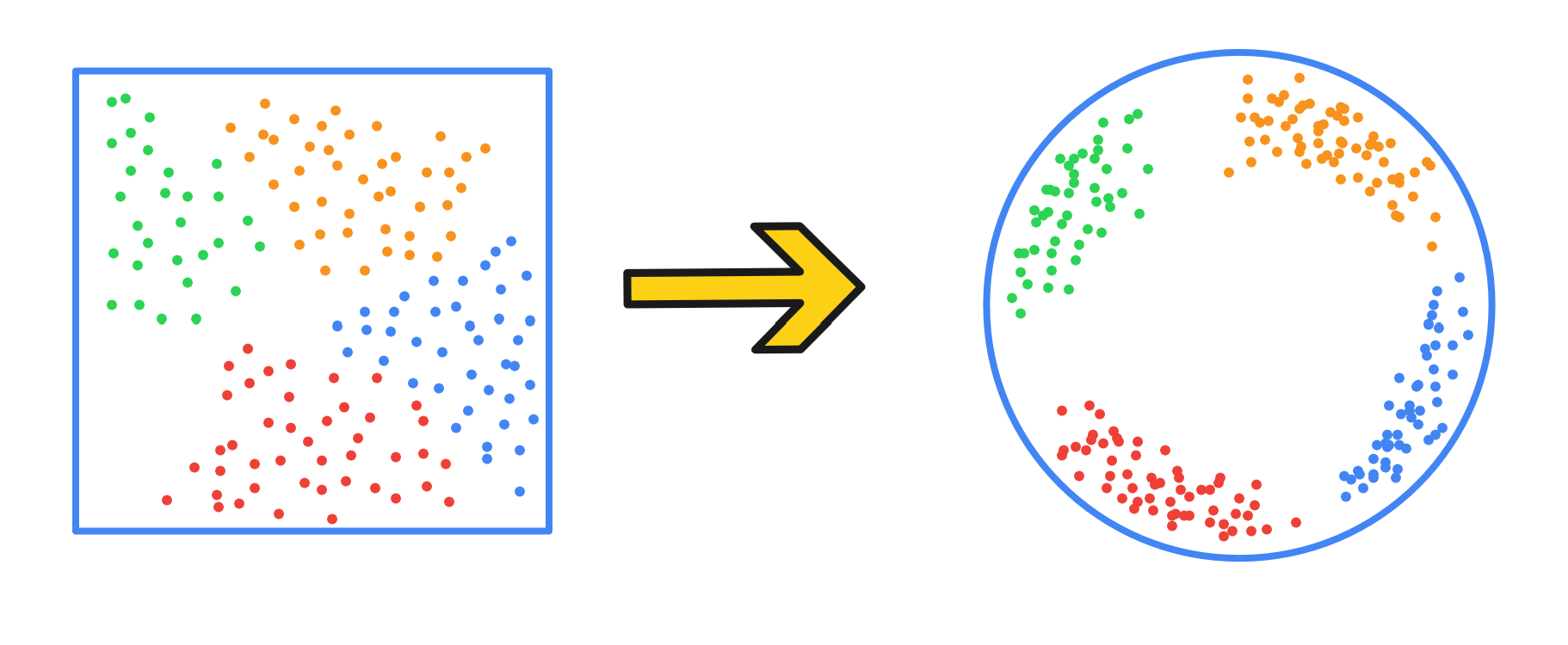}
         \caption{Embedding of the dataset from the Euclidean Space into the Poincar\'{e} Disc, the left figure describes how a natural hierarchy looks like in the Euclidean Space, the right figure describes how the embedded dataset looks like on the Poincar\'{e} Disc. }
         \label{fig: embedding}

\end{figure}     
\subsection{The Hyperbolic Spectral Clustering Algorithm (HSCA)}
Let $X = \{x_1, x_2, ..., x_N\}\in \rr^l$ for some $l\in\mathbb{N}$. We will follow the graph-based clustering methods where the edge of a graph (in the Euclidean Space) will be replaced by corresponding Geodesics in $H$. Below we elaborate on the steps of HSCA. For clarity, a pseudo-code of the algorithm is provided in Algorithm 1.  

\begin{enumerate}
    \item  Embed the original dataset into the Poincar\'{e} Disc of unit radius. Obtain $\mathcal{X^\prime}:=\{x_1^\prime, x_2^\prime,...,x_N^\prime\}\in H$ such that $x_i^\prime:=\frac{x_i}{\|x_i\|+\delta}$, where $\delta>0$ can be taken any small number, let's say $\delta=10^{-2}$, like Figure \ref{fig: embedding}.
    \item  Construct the graph $G(V,E)$ where $V = \{x_1^\prime,x_2^\prime,...,x_N^\prime\}$. The edge set $E:=\{e_{i,j}(x_i^\prime,x_j^\prime)| e_{ij}\}$ where $e_{ij}$ is the geodesic from $x_i^\prime$ to $x_j^\prime$ on $H$.
    \item For the each geodesic-edge $e_{ij}$ (from now on we will use edge instead of geodesic-edge) attach a weight $w_{ij}$ which measures the similarity between the respective nodes $x_i^\prime$ and $x_j^\prime$. The set of weights define the proximity matrix or affinity matrix, this is the $N\times N$ matrix $W$ such that
    \begin{align*}
        W\equiv [W(i,j)] = [w_{ij}], i,j\in\{1,2,...,N\}.
    \end{align*}
     The proximity matrix $W$ is symmetric. A common choice of the proximity matrix is the Gaussian Kernel or the Poisson Kernel(the same Gaussian kernel or Poisson Kernel on Euclidean Space with the metric replaced by geodesic distance on $H$). $W$ can be represented as
   \begin{align*}
       W(i,j) = \begin{cases}
        \exp(-\frac{d_g(x_i^\prime,x_j^\prime)^2}{\sigma^2}), \hspace{2mm} \text{if $d_g(x_i^\prime,x_j^\prime)\leq\epsilon$},\\
        0, \hspace{2mm} \text{otherwise.}
    \end{cases}
   \end{align*}
  for the Gaussian Kernel, or  
    \begin{align*}
        W(i,j) = \begin{cases}
        \exp(-\frac{d_g(x_i^\prime,x_j^\prime)}{2\sigma}), \hspace{2mm} \text{if $d_g(x_i^\prime,x_j^\prime)\leq\epsilon$}\\
        0, \hspace{2mm} \text{otherwise}
        \end{cases}
    \end{align*}
       
    for the Poisson Kernel, 
    where $\epsilon >0$ is pre-defined cut-off distance set by the user. 
    \item Now treat $W$ as the new data matrix, i.e. treat each row (or column since $W$ is symmetric) of $W$ as new data lying in $\rr^N$, call each row of $W$ by $w_i, i\in \{1,2,...,N\}$, where each $w_i$ is a representative of the original data $x_i, i\in\{1,2,...,N\}$ . Compute the modified affinity matrix $W^\prime:=exp\left(-\frac{\|w_i-w_j\|^2}{\sigma^2}\right)$. 
    \item Next compute the degree matrix $D:=diag(d_1,d_2, ...,d_N)$, where $d_i:=\sum_{j=1}^{N}W^\prime(i,j)$. Construct the graph Laplacian $L:=D-W^\prime$. Next construct the normalized graph Laplacian $\Tilde{L}:=D^{-1/2}LD^{-1/2}=I-D^{-1/2}W^\prime D^{-1/2}$. 
    \item Compute the first $k$ eigenvectors $u_1,u_2, ...,u_k$ of the normalized graph Laplacian $\Tilde{L}$ corresponding to the first $k$ eigenvalues of $0=\lambda_1\leq \lambda_2\leq ...\leq \lambda_k $ of $\Tilde{L}$. Call this matrix $U:=[u_1...u_k]\in \rr^{N\times k}$. Normalize the rows of $U$, construct $T\in \rr^{N\times k} $ such that  
    \begin{align*}
        T(i,j):=\frac{U(i,j)}{\sqrt{\sum_{l=1}^k U(i,l)^2}}.
    \end{align*}
    \item For $i=1,2,...,N$, let $y_i\in \rr^k$ be the $i-$th row of $T$. 
    \item Form clusters $C_1, C_2,...C_k$ from the points $\{y_i\}_{i=1}^N$ using $k-$means algorithm. The clusters are $C_j:=\{i|y_i\in C_j\}$. 
\end{enumerate}
\subsubsection{Explanation:} 
Step 1 is clear, since our model space is a subset of the Poincar\'{e} Disc, we have to embed each data point into the space. While doing that we have taken each point and divided it by it's Euclidean norm plus some small positive integer. This will ensure that the data points in the similar hierarchy will be closed in the embedding. Next we have computed the affinity matrix $W$ by treating each edge between points as geodesic edge and have the Poincar\'{e} metric absorbed into the corresponding Kernel. Next we have constructed the modified similarity matrix $W^\prime$ mainly by keeping the idea that if the points $x_i^\prime$ and $x_j^\prime$ in the embedded dataset are closed to a set of same points, then they should ideally belong to the same cluster. The spectral decomposition modified similarity matrix $W^\prime$ will ensure that the representative points in the rows of $W$ will form the cluster closed to the ground level. Rest of the steps are taken from the well known normalized Euclidean Spectral Clustering Algorithm proposed by Ng, Jordan and Weiss \cite{ng}. 

\begin{algorithm*}[!ht]

  \caption{\footnotesize Hyperbolic Spectral Clustering Algorithm (HSCA)}
  \label{alg:surge}
  \scriptsize
  \begin{flushleft}
      {\bfseries Input:} Dataset $\mathcal{X}:=\{x_1, x_2,...,x_N\}\in\rr^n$,  number of clusters=$k$, hyperparameter $\sigma$, cut-off length=$\epsilon$. \\
  \end{flushleft}
  \begin{flushleft}
      {\bfseries Output:} Cluster labels $\mathcal{C}:=\{C_1,C_2,...,C_k\}$ where $C_i:=\{j|x_j\in C_i\}$.\\
  \end{flushleft}
  
  \let \oldnoalign \noalign
  \let \noalign \relax
  
  \let \noalign \oldnoalign
 
  \begin{algorithmic}[1]
 
  \STATE \textbf{Transformation:} Obtain $\mathcal{X^\prime}:=\{x_1^\prime, x_2^\prime,...,x_N^\prime\}\in H$ such that $x_i^\prime:=\frac{x_i}{\|x_i\|+\delta}$, where $\delta>0$ small, let's say $\delta=10^{-2}$.
  \STATE \textbf{Constructing The Similarity Matrix:} Construct $W\in\rr^{N\times N}$ with $W(i,j):=\begin{cases}
      exp\left(-\frac{d_g(x_i^\prime, x_j^\prime)^2}{\sigma^2}\right), \hspace{2ex} \text{if} \hspace{1ex} d_g(x_i^\prime, x_j^\prime)\leq\epsilon\\
      0, \hspace{2ex} \text{otherwise}.
  \end{cases}$ or \hspace{2ex}
    $W(i,j) := \begin{cases}
        \exp(-\frac{d_g(x_i,x_j)}{2\sigma}), \hspace{2mm} \text{if $d_g(x_i,x_j)\leq\epsilon$}\\
        0, \hspace{2mm} \text{otherwise}
    \end{cases}$    
    \\
    \STATE \textbf{Constructing New Dataset and Computing Modified Affinity Matrix:} Treat each row $w_i$ of $W$ as new data point and compute $W^\prime:=exp\left(-\frac{\|w_i-w_j\|^2}{\sigma^2}\right)$. 
  \STATE \textbf{Constructing the Degree Matrix:} Construct the diagonal degree matrix $D\in\rr^{N\times N}$ with $D(i,j):=\begin{cases}
      \sum_{j=1}^N W^\prime(i,j), \hspace{2ex} \text{if} \hspace{1ex} i=j\\
      0, \hspace{2ex} \text{otherwise}.
  \end{cases}$\\
  \STATE \textbf{Construct the Normalized Graph Laplacian:} Obtain $L:=D-W^\prime\in\rr^{N\times N}$. Then Construct $\Tilde{L}:=D^{-1/2}LD^{-1/2}\in\rr^{N\times N}$. 
  \STATE \textbf{Spectral Decomposition of the Normalized Graph Laplacian:} Obtain the first $k$ eigenvalues of $\Tilde{L}$, $0=\lambda_1\leq\lambda_2\leq...\leq \lambda_k$ and the corresponding eigenvectors $u_k\in\rr^N$. Let $U:=[u_1,u_2,...u_k]\in\rr^{N\times k}$.\\
  \STATE \textbf{Normalizing the Eigen Matrix:} Normalize the rows of $U$, obtain $T\in\rr^{N\times k}$ such that $T(i,j):=\frac{U(i,j)}{\sqrt{\sum_{l=1}^k U(i,l)^2}}$. 
  \STATE \textbf{Representative Points on $\rr^k$:} Let $\mathcal{Y}:=\{y_1,y_2,...y_N\}\in\rr^k$, where $y_i$ represents $x_i$ for $i\in\{1,2,...,N\}$ and $y_i^j=U(i,j)$. \\
  \STATE \textbf{Cluster Formation:} Obtain the clusters $C_1,C_2,...,C_k$ by performing $k-$ means clustering on $\mathcal{Y}$, where $C_i:=\{j|y_j\in C_i\}$. 

\end{algorithmic}

\end{algorithm*}

\subsubsection*{Time Complexity:} The most computationally intense step is to perform the spectral decomposition of the similarity matrix which takes time $\mathcal{O}(n^3)$ \cite{siron}. Calculating the pairwise Poincare Distance and the similarity matrix each takes $\mathcal{O}(n^2)$ time. Finally applying the $k-$means during the eigenvalue decomposition takes $\mathcal{O}(nldk)$ amount of time, where $n$ is the number of data points, $k$ is the number of clusters, $l$ is the number of $k-$means iterates and $d$ is the dimension of the Hyperbolic Space $H$. 
\vspace{-3mm}
\subsection{Approximated Hyperbolic Spectral Clustering with Poincar\'{e} $k$-Means Based landmark Selection (HLS K)}

Here we shortly describe another variant of the HSCA proposed with an analogy to the Approximated Spectral Clustering with $k$-Means based landmark Selection, one variant of the ESCA. We assume the form of the input dataset as $\mathcal{X}:=\{x_1, x_2,...,x_N\}\in\rr^n$. We also assume the number of landmark points are $m \in \mathbb{N}$, where $m$ has to be chosen carefully such that $m>>k$, where $k$ is the number of original clusters and also $m$ should not be very small compared to $N$. [For example, if the number of samples is $5000$, and $k=10$, we can choose $m=100$ or $200$.]  The steps of the algorithm are provided as follows: [For details we refer to the supplementary]

\begin{enumerate}
    \item We obtain $\mathcal{X^\prime}:=\{x_1^\prime, x_2^\prime,...,x_N^\prime\}\in H$ such that $x_i^\prime:=\frac{x_i}{\|x_i\|+\delta}$, where $\delta>0$ small, let's say $\delta=10^{-2}$.
    \item Next we perform the initial Poincar\'{e} $k-$Means [This is a $k-$Means clustering with the Euclidean Distance replaced by the Poincar'{e} Distance] to form $m$ clusters, namely $I_1, I_2,..., I_m$ with centroids of these clusters as $y_1,y_2,...,y_m$ respectively.
    \item Now we construct $V\in \rr^{m\times N}$ such that  $V(i,j):=\begin{cases}
      exp\left(-\frac{d_g(y_i, x_j^\prime)^2}{\sigma^2}\right), \hspace{2ex} \text{if} \hspace{1ex} d_g(y_i, x_j^\prime)\leq\epsilon\\
      0, \hspace{2ex} \text{otherwise}.
  \end{cases}$ or \hspace{2ex}
    $W(i,j) := \begin{cases}
        \exp(-\frac{d_g(y_i,x_j^\prime)}{2\sigma}), \hspace{2mm} \text{if $d_g(y_i,x_j^\prime)\leq\epsilon$}\\
        0, \hspace{2mm} \text{otherwise}
    \end{cases}$.
    \item Then we normalize the columns of $V$, i.e., form the matrix $E\in \rr^{m\times N}$ with $E(i,j):=\frac{V(i,j)}{\sum_{i=1}^mV(i,j)}$. Construct the diagonal matrix $D_E=diag(d_1,d_2,...,d_m)\in\rr^{m\times m}$ with $d_i=:(\sum_{j=1}^mE(i,j))^{-1/2}$. Construct $Z=DE\in\rr^{m\times N}$. Form the final matrix $F:=Z^tZ$.
    \item Finally we follow steps $3-9$ of HSCA (Algorithm \ref{alg:surge}) to form the clusters.
    
\end{enumerate}

\section{Verifying The Consistency of HSCA}  \label{sec:5}
Following our previous notations, $x$ and $y$ are two points on the Poincar\'{e} Disc and the metric is given as
\begin{align*}
    d(x,y)=2\sinh^{-1}\left(\sqrt{\frac{\delta(x,y)}{2}}\right),
\end{align*}
where
\begin{align*}
    \delta(x,y)=2\frac{\|x-y\|^2}{(1-\|x\|^2)(1-\|y\|^2)}.
\end{align*}

The Hyperbolic Gaussian Kernel $K_{H_G}$ is given as
\begin{align*}
    K_{H_G}(x,y)=\exp(-ad(x,y)^2), a>0
\end{align*}
and the Hyperbolic Poisson Kernel $K_{H_P}$ is given as
\begin{align*}
    K_{H_P}(x,y)=\exp(-ad(x,y)), a>0.
\end{align*}
At first, we will prove the consistency of the HSCA using Gaussian Kernel. The proof will be quite similar if we use the Poisson Kernel.  Before that we will look at a couple of results involved in the consistency proof.  
\begin{lemma}\label{lem:5.1}
    For the usual Euclidean Gaussian Kernel given by $K(x,y)=exp(-a\|x-y\|^2)$, we have $K_{H_G}(x,y)\leq K(x,y)$ whenever $x,y\in H$.
\end{lemma}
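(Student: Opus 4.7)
Since both kernels have the form $\exp(-a(\cdot)^2)$ with the same constant $a>0$ and $t \mapsto e^{-at^2}$ is monotonically decreasing on $[0,\infty)$, the asserted inequality $K_{H_G}(x,y) \leq K(x,y)$ is equivalent to the distance comparison
\begin{equation*}
    d(x,y) \;\geq\; \|x-y\| \qquad \text{for all } x,y \in H.
\end{equation*}
So the entire lemma reduces to proving this single inequality between the Poincar\'e metric $d$ and the ambient Euclidean metric on the unit disc.

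The plan is to obtain this comparison via the conformal factor of the Poincar\'e metric, which I find cleaner than manipulating the explicit $2\sinh^{-1}(\sqrt{\delta/2})$ formula. Recall that the Riemannian metric on $\mathbb{D}^n$ whose induced distance is $d$ is $g = \lambda(z)^2 \, g_{\mathrm{Euc}}$ with conformal factor $\lambda(z) = 2/(1-\|z\|^2)$. Since $\|z\| < 1$ on the open unit disc, we have $\lambda(z) \geq 2$ pointwise. Hence for any piecewise smooth curve $\gamma:[a,b] \to \mathbb{D}^n$ joining $x$ to $y$,
\begin{equation*}
    L_g(\gamma) \;=\; \int_a^b \lambda(\gamma(t)) \, \|\gamma'(t)\| \, dt \;\geq\; 2 \int_a^b \|\gamma'(t)\| \, dt \;=\; 2 L_{\mathrm{Euc}}(\gamma) \;\geq\; 2\|x-y\|,
\end{equation*}
where the last bound is the standard fact that the Euclidean straight-line segment minimizes Euclidean length. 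Taking the infimum over $\gamma$ gives $d(x,y) \geq 2\|x-y\| \geq \|x-y\|$, as required.

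Squaring yields $d(x,y)^2 \geq \|x-y\|^2$, so $-a\, d(x,y)^2 \leq -a\|x-y\|^2$, and exponentiating gives $K_{H_G}(x,y) = e^{-a\, d(x,y)^2} \leq e^{-a\|x-y\|^2} = K(x,y)$. I would note that the argument never uses that $x,y$ lie in the compact subset $H$ rather than the full open disc, so the inequality is in fact valid on all of $\mathbb{D}^n$; the restriction to $H$ becomes important only later in the consistency argument, not here. There is no real obstacle to overcome: the only subtle point is recognizing that one should bypass the transcendental formula for $d(x,y)$ entirely and argue by the uniform lower bound $\lambda \geq 2$ on the conformal factor, which is an immediate consequence of $\|z\| < 1$.
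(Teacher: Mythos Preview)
Your proof is correct but proceeds along a genuinely different line from the paper's. The paper works directly with the explicit formula $d(x,y)=2\sinh^{-1}\!\bigl(\sqrt{\delta(x,y)/2}\bigr)$ in three steps: first it observes that $(1-\|x\|^2)(1-\|y\|^2)\leq 1$ forces $\delta(x,y)\geq 2\|x-y\|^2$; then monotonicity of $\sinh^{-1}$ gives $d(x,y)\geq 2\sinh^{-1}(\|x-y\|)$; finally the elementary calculus inequality $\sinh^{-1}(t)\geq t/2$ on the relevant range yields $d(x,y)\geq\|x-y\|$. You bypass this transcendental chain entirely by comparing lengths infinitesimally through the conformal factor $\lambda(z)=2/(1-\|z\|^2)\geq 2$, which is shorter, avoids any case analysis on the range of $\|x-y\|$, and immediately delivers the sharper bound $d(x,y)\geq 2\|x-y\|$. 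The paper's route has the small expository virtue of staying within the algebraic distance formula already introduced, requiring no reference to the Riemannian length functional; your route is more conceptual and transfers verbatim to any conformal metric whose factor is uniformly bounded below by~$1$.
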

\begin{proof}
    See Appendix A.
\end{proof}

\begin{rem}
    Lemma \ref{lem:5.1} also holds true for the Poisson Kernel. In step 3 we only need to use the function$f(x):=\sinh^{-1}(x)-\frac{x^2}{4}$ [which is true, since $f^\prime(x)=\frac{1}{\sqrt{1+x^2}}-\frac{x}{2}\geq \frac{2-\sqrt{2}}{2\sqrt{2}}$ for $0\leq x\leq 1$ ] instead of $f(x):=\sinh^{-1}(x)-\frac{x}{2}$. There will be an extra constant $1/2$ in the exponent, but this will not affect the proof of Lemma \ref{lem:5.4} presented later. 
\end{rem}

\begin{rem}\label{rem:5.2}
    $K_{H_G}$ is radial: If we fix one variable, let's say $y$ at $0$, then $\delta(x,0)=2\frac{\|x\|^2}{1-\|x\|^2}$, which is a radial function. Therefore, the Poincar\'{e} Metric is also radial, so is the Hyperbolic Gaussian Kernel.
\end{rem}

\begin{lemma}\label{lem:5.2}
    The hyperbolic Gaussian Kernel $K_{H_G}\in L^1(H)$, i.e. this kernel is an absolutely integrable.
\end{lemma}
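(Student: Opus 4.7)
The plan is to deduce integrability from the pointwise domination already established in Lemma \ref{lem:5.1}, together with the classical Gaussian integral formula. Since Lemma \ref{lem:5.1} gives the pointwise estimate $K_{H_G}(x,y) \leq K(x,y) = \exp(-a\|x-y\|^2)$ for all $x,y \in H$, the hyperbolic Gaussian kernel is dominated on $H$ by the ordinary Euclidean Gaussian, whose integrability over $\mathbb{R}^n$ is standard. So the whole proof amounts to assembling a domination argument.

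In more detail, I would fix one variable and use Remark \ref{rem:5.2} to reduce the problem to a one-variable integral. Fix $y \in H$; by the isometry invariance of the Poincar\'{e} metric and the radial character of $K_{H_G}$ noted in Remark \ref{rem:5.2}, it suffices to treat the case $y=0$. Then, using Lemma \ref{lem:5.1}, I would write
\[
\int_{H} K_{H_G}(x,0)\,dx \;\leq\; \int_{H} \exp(-a\|x\|^2)\,dx \;\leq\; \int_{\mathbb{R}^n} \exp(-a\|x\|^2)\,dx \;=\; \left(\frac{\pi}{a}\right)^{n/2} \;<\; \infty.
\]
Since $y$ was arbitrary (or rather, the choice $y=0$ was made without loss of generality via an isometry of $\mathbb{H}^n$), this establishes $K_{H_G}(\cdot,y) \in L^1(H)$ for every $y$, with a uniform bound independent of $y$. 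If the lemma is read as integrability on the product $H \times H$, one then multiplies by $\mathrm{vol}(H) < \infty$ (finite because $H$ is compact) to conclude.

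The only genuine point requiring care is reconciling the reference measures: Lemma \ref{lem:5.1} is a pointwise comparison, while the Gaussian integral formula lives against Lebesgue measure on $\mathbb{R}^n$. Because $H = \{x \in \mathbb{D}^n : \|x\|\leq 1-\delta\}$ is compact and lies strictly inside the unit ball, the hyperbolic volume element has density $2^n(1-\|x\|^2)^{-n}$ bounded above on $H$, so $L^1$ with respect to either measure is equivalent on $H$. I expect no serious obstacle — the argument is essentially a two-line domination, with the comparison doing all of the work already carried out in Lemma \ref{lem:5.1}.
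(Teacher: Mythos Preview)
Your argument is correct and follows essentially the same route as the paper: fix $y=0$ via the radial property (Remark \ref{rem:5.2}), invoke the pointwise bound $K_{H_G}(x,0)\le \exp(-a\|x\|^2)$ from Lemma \ref{lem:5.1}, and then dominate by the full Gaussian integral over $\rr^n$. Your additional remarks on the product case and on the equivalence of the hyperbolic and Lebesgue volume elements on the compact set $H$ are more than the paper supplies, but they only sharpen the same argument rather than change it.
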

\begin{proof}
    See Appendix A. 
\end{proof}

\subsubsection*{Terminology:} For a compact subset $\Omega\in\rr^n$ [with $0$ in its interior], we call $\Omega$ to be symmetric if for every $x\in\Omega$ and for every $M\in SO_n(\rr^n)$, we have $Mx\in\Omega$.

\begin{lemma}\label{lem:5.3}
  Suppose $\Omega\in\rr^n$ is symmetric, $f\in L^1(\Omega)$ and $f$ is radial. Then its Fourier Transform is also radial.
\end{lemma}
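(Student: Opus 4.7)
The plan is to use the rotation invariance of Lebesgue measure together with the fact that, thanks to the symmetry of $\Omega$, the function $f$ extends by zero to a radial $L^1$ function on all of $\mathbb{R}^n$. First I would set $\tilde f := f\cdot \mathbb{1}_\Omega$, extended by zero to $\mathbb{R}^n$. Because $\Omega$ is $SO_n(\mathbb{R})$-invariant and $f$ is radial on $\Omega$, the extension $\tilde f$ satisfies $\tilde f(Rx)=\tilde f(x)$ for every $R\in SO_n(\mathbb{R})$ and every $x\in\mathbb{R}^n$, and $\tilde f\in L^1(\mathbb{R}^n)$ with the same norm as $f$. This reduction is what lets one work with the Fourier transform on all of $\mathbb{R}^n$ without boundary issues coming from $\Omega$.

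Next I would compute directly. For $\xi\in\mathbb{R}^n$ and $R\in SO_n(\mathbb{R})$,
\begin{align*}
    \widehat{\tilde f}(R\xi) \;=\; \int_{\mathbb{R}^n} \tilde f(x)\, e^{-2\pi i \langle x,\, R\xi\rangle}\, dx
    \;=\; \int_{\mathbb{R}^n} \tilde f(x)\, e^{-2\pi i \langle R^{\top}x,\, \xi\rangle}\, dx,
\end{align*}
using $\langle x, R\xi\rangle=\langle R^\top x,\xi\rangle$. Performing the change of variables $y=R^\top x$, which is a $C^\infty$ diffeomorphism of $\mathbb{R}^n$ with Jacobian determinant $\pm 1$ (Lebesgue measure is $O_n(\mathbb{R})$-invariant), gives
\begin{align*}
    \widehat{\tilde f}(R\xi) \;=\; \int_{\mathbb{R}^n} \tilde f(Ry)\, e^{-2\pi i\langle y,\xi\rangle}\,dy.
\end{align*}
Finally, radiality of $\tilde f$ yields $\tilde f(Ry)=\tilde f(y)$, so the integral equals $\widehat{\tilde f}(\xi)$, and since $R$ was arbitrary, $\widehat{\tilde f}$ is radial. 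Identifying $\widehat{\tilde f}$ with what the statement calls the Fourier transform of $f$ finishes the proof.

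There is no real obstacle here; the only step requiring a little care is the very first one, where symmetry of $\Omega$ is used to guarantee that the zero-extension preserves radiality (without $SO_n$-invariance of $\Omega$, the extension would generally not be radial, as cutting off at a non-radial boundary would break the symmetry). The rest is a one-line change of variables that is routine but genuinely uses both the radiality of $f$ and the unimodularity of rotations.
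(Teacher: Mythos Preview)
Your proposal is correct and follows essentially the same route as the paper's proof: a rotation is applied to the Fourier variable, the inner product is moved across via orthogonality, a change of variables with unit Jacobian is performed, and radiality of $f$ together with $SO_n$-invariance of $\Omega$ closes the argument. The only cosmetic difference is that you zero-extend to $\mathbb{R}^n$ first, whereas the paper integrates directly over $\Omega$ and uses $M(\Omega)=\Omega$ after the substitution; these are the same computation.
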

\begin{proof}
    See Appendix A.
\end{proof}

Next we intend to use Theorem 3 \cite{zhou} and this necessitates computing the Fourier Transform $\widehat{k}(w)$ of $k_{H_G}(x)$ and will show that $\widehat{k}$ decays exponentially. 
\begin{lemma}\label{lem:5.4}
    There exist $C,l>0$ such that $\widehat{k}(w)\leq C\exp(-l|w|)$ for all $w\in\rr^n$.
\end{lemma}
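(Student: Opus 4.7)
My approach combines the pointwise Gaussian bound from Lemma \ref{lem:5.1} with a Paley--Wiener style contour-shift argument, exploiting that $k_{H_G}$ is real-analytic on its support. First I would reduce to a one-dimensional estimate: by Remark \ref{rem:5.2}, $k_{H_G}$ is radial, and hence by Lemma \ref{lem:5.3}, so is $\widehat{k}$. It therefore suffices to bound $\widehat{k}(w e_1)$ for $w>0$, and the full estimate follows by rotational symmetry. Lemma \ref{lem:5.2} guarantees $k_{H_G}\in L^1(\rr^n)$ (extending by zero outside $H$), so $\widehat{k}$ is well-defined and continuous.

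Next I would exploit analyticity. The radial profile $r\mapsto \exp(-4a(\tanh^{-1}r)^2)$ is real-analytic on $(-1,1)$, and since $\tanh^{-1}$ has branch points only at $\pm 1$, the kernel $k_{H_G}(x)=\exp(-4a(\tanh^{-1}\|x\|)^2)$ extends to a bounded holomorphic function on a complex tube
\begin{align*}
    T_\eta = \{z=x+iy\in\cc^n:\|x\|\le 1-\delta/2,\ |y|<\eta\}
\end{align*}
for some $\eta>0$ depending on $\delta$. Inserting a smooth cutoff $\chi$ that equals $1$ on $H$ and is compactly supported in $\{\|x\|\le 1-\delta/4\}$ gives a function $\chi k_{H_G}$ which coincides with $k_{H_G}$ on $H$, has compact support, and admits the holomorphic extension to $T_\eta$. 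The bound from Lemma \ref{lem:5.1}, $k_{H_G}(x)\le \exp(-a\|x\|^2)$, extends to the tube with mildly perturbed constants, uniformly controlling the holomorphic extension.

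Then I would apply the contour shift. For $w=w_1 e_1$ with $w_1>0$, deforming the $x_1$-integration path from the real line to $\mathrm{Im}(x_1)=-\eta$ via Cauchy's theorem yields
\begin{align*}
    \widehat{\chi k_{H_G}}(w_1 e_1) = e^{-\eta w_1}\int (\chi k_{H_G})(x_1 - i\eta, x')\, e^{-iw_1 x_1}\, dx_1\, dx'.
\end{align*}
The integrand is uniformly bounded and compactly supported in $x$, so the integral is $O(1)$, yielding $|\widehat{\chi k_{H_G}}(w_1 e_1)|\le C e^{-\eta w_1}$. The leftover $\widehat{(1-\chi)k_{H_G}}$ decays faster than any polynomial since $(1-\chi)k_{H_G}$ is Schwartz class (inheriting the super-polynomial decay of $k_{H_G}$ near $\partial\dd^n$), hence is dominated by $Ce^{-\eta|w|}$ for large $|w|$. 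Combining with radial symmetry gives $\widehat{k}(w)\le C\exp(-\eta|w|)$ for all $w\in\rr^n$, establishing the claim with $l=\eta$.

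The main obstacle is controlling the holomorphic extension. Because $\tanh^{-1}$ has branch points at $\pm 1$, the tube width $\eta$ depends sensitively on $\delta$: it must be small enough that $\sqrt{\sum z_i^2}$ stays uniformly bounded away from the branch points on all of $T_\eta$. Quantifying this requires the representation $\tanh^{-1}(z)=\tfrac{1}{2}\log((1+z)/(1-z))$ and tracking how $(\tanh^{-1}z)^2$, and hence $\exp(-4a(\tanh^{-1}z)^2)$, behaves for complex $z$ near the real interval $[-(1-\delta),\,1-\delta]$. Balancing the width of the tube (which governs the exponential rate $l$) against the uniform boundedness of the extension (which governs $C$) is the delicate step; the looseness one accepts here is why the decay rate produced is merely $e^{-l|w|}$ rather than the Gaussian $e^{-|w|^2/(4a)}$ that one would naively expect from Lemma \ref{lem:5.1} alone.
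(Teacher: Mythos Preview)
Your route is quite different from the paper's. The paper does not use analyticity or contour shifting at all; it simply writes $k_{H_G}(x)=\bigl(k_{H_G}(x)e^{a\|x\|^2}\bigr)e^{-a\|x\|^2}$, bounds the first factor pointwise by $1$ via Lemma~\ref{lem:5.1}, and then compares $|\widehat{k}(w)|$ directly with the Fourier transform of the Euclidean Gaussian $e^{-a\|x\|^2}$, which is again a Gaussian and hence $O(e^{-l|w|})$. That is a two-line pointwise estimate, not a complex-analytic argument.

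Your proposal, however, has a genuine gap at the claim that ``$\chi k_{H_G}$ \ldots\ admits the holomorphic extension to $T_\eta$.'' A smooth cutoff $\chi$ is never real-analytic, so multiplying by it destroys exactly the analyticity you need: $\chi k_{H_G}$ does not extend holomorphically to any tube, and Cauchy's theorem cannot be invoked to move the $x_1$-contour of its Fourier integral. If instead you drop $\chi$ and attempt the shift for $k_{H_G}$ alone, the integrand is analytic only inside the disk and you pick up boundary contributions rather than clean exponential decay. There is a second gap in the treatment of the remainder: that $\widehat{(1-\chi)k_{H_G}}$ decays faster than every polynomial does \emph{not} imply it is $O(e^{-\eta|w|})$; super-polynomial decay such as $e^{-\sqrt{|w|}}$ is strictly weaker than exponential. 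More structurally, a nonzero compactly supported $L^1$ function can never have an exponentially decaying Fourier transform on $\rr^n$: such decay would force the function to continue analytically to a strip around $\rr^n$ and hence, by the identity theorem, to vanish identically. This obstruction shows that the Paley--Wiener strategy cannot succeed for the kernel truncated to $H$ (or to the closed disk) and extended by zero, regardless of how the cutoff is arranged.
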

\begin{proof}
    See Appendix A.
\end{proof}

\subsubsection*{Terminology and Definitions:} Let $H$ be the compact subset of the Poincar\'{e} Disc as defined above. $k:H\times H\to \rr$ be the similarity function with the Gaussian Kernel equipped with the Poincar\'{e} Metric. Let $h:H\times H\to \rr$ be the normalized similarity function. Then for any continuous function $g\in \mathcal{C}(H)$, we define the following [as in section 6 \cite{lux}:
\begin{align*}
    \mathcal{K}&:=\{k(x,\cdot):x\in H\},\\
     \mathcal{H}&:=\{h(x,\cdot):x\in H\},\\
    g\cdot \mathcal{H}&:=\{g(x)h(x,\cdot):x\in H\},\\
     \text{and}\mathcal{H}\cdot\mathcal{H}&:=\{h(x,\cdot)h(x,\cdot), x\in H\}.
\end{align*}
We also define $\mathcal{F}:= \mathcal{K}\cup(g\cdot \mathcal{H})\cup (\mathcal{H}\cdot \mathcal{H})$. 
Now we will re-write Theorem 19\cite{lux} with a slightly modified proof. 
\begin{thm}\label{them:5.1}
    Let $(H,\mathcal{A}, P)$ be a probability space with $\mathcal{A}$ being any arbitrary sigma algebra on $H$. Let $\mathcal{F}$ be defined as above with $\|f\|_{\infty}\leq 1$. Let $X_n$ be a sequence of i.i.d. random variables drawn according to the distribution $P$ and $P_n$ be the corresponding emperical diustributions. Then there exists a constant $c>0$ such that for all $n\in \mathbb{N}$ with probability at least $\delta$, 
    \begin{align*}
        \sup_{f\in \mathcal{F}}|P_nf-Pf|&\leq \frac{c}{\sqrt{n}}\int_{0}^{\infty}\sqrt{\log(\mathcal{N}, \epsilon, L^2(P_n))}d\epsilon\\
        &+\sqrt{\frac{1}{2n}\log\left(\frac{2}{\delta}\right)},
    \end{align*}
    where $\mathcal{N}$ is the covering number of the space $H$ with ball of radius $\epsilon$ with respect to the metric $L^2(P_n)$. Hence \textbf{the rate of convergence of the Hyperbolic Spectral Clustering is $\mathcal{O}\left(\frac{1}{\sqrt{n}}\right)$}. 
\end{thm}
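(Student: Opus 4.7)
The target inequality is a standard uniform deviation bound for the empirical process indexed by $\mathcal{F}$, and I would follow the well-trodden Rademacher--chaining--concentration route, adapted from von Luxburg's proof (Theorem 19 of \cite{lux}) but with the covering-number estimates redone for the hyperbolic kernels $k=K_{H_G}$ and $h$. The ambient setup is favourable: $H$ is compact, $\|f\|_\infty\le 1$ for every $f\in\mathcal{F}$, and by Lemma \ref{lem:5.1} and Lemma \ref{lem:5.2} the hyperbolic Gaussian kernel is bounded, continuous, and $L^1$, so the normalized kernel $h$ and the products appearing in $g\cdot\mathcal{H}$ and $\mathcal{H}\cdot\mathcal{H}$ are all uniformly bounded continuous functions on $H\times H$.

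\paragraph{Step 1: concentration of the supremum.}
First I would fix the sample size $n$ and view $\Phi(X_1,\dots,X_n):=\sup_{f\in\mathcal{F}}|P_nf-Pf|$ as a function of $n$ independent inputs. Because each $f\in\mathcal{F}$ takes values in $[-1,1]$, changing a single coordinate $X_i$ perturbs $\Phi$ by at most $2/n$. McDiarmid's bounded-differences inequality then gives
\begin{equation*}
\Phi \;\le\; \E\,\Phi \;+\; \sqrt{\tfrac{1}{2n}\log(2/\delta)}
\end{equation*}
with probability at least $1-\delta$, which already accounts for the last term in the stated bound.

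\paragraph{Step 2: symmetrization and chaining.}
Next I would bound $\E\,\Phi$ by the expected Rademacher complexity via the standard symmetrization lemma, obtaining $\E\,\Phi\le 2\,\E\,R_n(\mathcal{F})$ where $R_n(\mathcal{F})=\E_\sigma\sup_{f\in\mathcal{F}}\tfrac{1}{n}\sum_{i=1}^n\sigma_i f(X_i)$. Conditioning on the sample, I would apply Dudley's entropy integral to the subgaussian Rademacher process on the pseudometric $L^2(P_n)$, yielding
\begin{equation*}
R_n(\mathcal{F}) \;\le\; \frac{c}{\sqrt{n}}\int_0^{\infty}\sqrt{\log \mathcal{N}(\mathcal{F},\epsilon,L^2(P_n))}\,d\epsilon.
\end{equation*}
Combining this with Step 1 yields exactly the inequality claimed in the theorem.

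\paragraph{Step 3: verifying the entropy integral is finite, hence the $O(n^{-1/2})$ rate.}
To conclude the final assertion ``the rate is $\mathcal{O}(1/\sqrt n)$'', I still need the entropy integral to be bounded by a constant independent of $n$. This is where the geometric structure is used. The index set of $\mathcal{F}$ is $H$, which is compact in $\mathbb{R}^n$, so it has Euclidean $\epsilon$-covering number of order $\epsilon^{-n}$. The hyperbolic Gaussian kernel $k(x,\cdot)$ is Lipschitz in $x$ uniformly on the compact slab $H$ (since $d_g$ is smooth away from the boundary $\|x\|=1-\delta$ and $H$ is strictly interior), and $h$ inherits the same regularity because its normalization is a smooth positive function bounded away from $0$ on $H$. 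A product of two uniformly bounded Lipschitz functions is Lipschitz, so every $f\in\mathcal{F}$ is Lipschitz in its index with a common constant $L$. Consequently, an $(\epsilon/L)$-net of $H$ in Euclidean distance induces an $\epsilon$-net of $\mathcal{F}$ in the sup-norm, and a fortiori in $L^2(P_n)$. Hence $\log\mathcal{N}(\mathcal{F},\epsilon,L^2(P_n))\lesssim n\log(L/\epsilon)$ for $\epsilon\le 2$ (and the covering number is $1$ for $\epsilon>2$), so the Dudley integral is a finite constant independent of $n$, and the overall bound is $O(n^{-1/2})$.

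\paragraph{Expected obstacle.}
The delicate step is Step 3: controlling the covering number of the product class $\mathcal{H}\cdot\mathcal{H}$ uniformly in the empirical measure. One must show that the normalization used to pass from $k$ to $h$ does not blow up --- i.e.\ that the degree function $d(x):=\int_H k(x,y)\,dP(y)$ is bounded below by a positive constant on $H$. This requires the support of $P$ to be rich enough, which is built into the standard consistency hypotheses (full support on $H$); once this lower bound is secured, Lipschitz continuity of $h$, and hence of the product $h\cdot h$, follows by the quotient rule, and the rest of the chaining argument proceeds routinely.
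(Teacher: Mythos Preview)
Your Steps 1--2 (McDiarmid plus symmetrization plus Dudley chaining) are exactly the content the paper imports wholesale by citing Theorem~19 of \cite{lux}, so there is no divergence there. The genuine difference is in Step~3, where you and the paper bound the covering number of $\mathcal{F}$ by completely different mechanisms. The paper's route is Fourier-analytic: Lemma~\ref{lem:5.4} shows that the hyperbolic Gaussian kernel has an exponentially decaying Fourier transform, and then Theorem~3 of \cite{zhou} converts this decay into a polylogarithmic bound $\log\mathcal{N}(\mathcal{F},\epsilon,\|\cdot\|_\infty)\le C_0(\log(1/\epsilon))^{d+1}$, from which finiteness of the Dudley integral is immediate. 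Your route instead observes that $\mathcal{F}$ is a Lipschitz image of the compact parameter set $H$, so an $\epsilon$-net of $H$ in the Euclidean metric pushes forward to an $\epsilon$-net of $\mathcal{F}$ in sup-norm, giving $\log\mathcal{N}\lesssim d\log(1/\epsilon)$.

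Both arguments are valid and both make the entropy integral finite; yours is more elementary and in fact yields a sharper entropy bound, at the cost of being tied to the specific parametric form $\{k(x,\cdot):x\in H\}$. The paper's approach, by going through Zhou's RKHS covering-number theorem, buys generality (it would control the whole RKHS unit ball, not just the feature-map slice) and is the reason Lemmas~\ref{lem:5.3} and~\ref{lem:5.4} exist in the paper at all --- your argument would let one dispense with them for the purposes of this theorem. One minor caution: you are overloading the symbol $n$ for both the ambient dimension of $H$ (inherited from Section~3) and the sample size (as in the theorem statement); keep these separate when you write it up, or the bound $\log\mathcal{N}\lesssim n\log(L/\epsilon)$ will look sample-size dependent when it is not.
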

\begin{proof}
    See Appendix A.
\end{proof}


\begin{rem}
    Note that in deriving the convergence rate of the hyperbolic spectral clustering, we used results used mostly in the proof of the consistency of spectral clustering in the Euclidean set-up. The hyperbolic metric is in general very powerful compared to the squared euclidean metric, which forces the hyperbolic Gaussian/Poisson Kernel converging to $0$ much faster than the Euclidean ones. Therefore, we believe the convergence rate of the hyperbolic spectral clustering can be improved, which requires estimating a careful bound on the logarithmic covering number with respect to the hyperbolic metric.  
 \end{rem}

\section{Experiments and Results} \label{sec:6}
\subsection{Description of the Datasets:} 
To examine the clustering efficiency of our proposed algorithm, we will use a combination of total \( 7 \) datasets, among these \( 4 \) are real datasets \textit{Airport, Glass, Zoo} and \textit{Wisconsin} and \( 3 \) are synthetic datasets \textit{2d-20c-no0, D31}, and \textit{st900}. The ground cluster levels are not available in case of the Airport dataset. Details of the datasets are shown in Table \ref{tab:1}. Since the Airport dataset is too large, we took random samples of sizes \( 1000 \), \( 2000 \), \( 5000 \) and \( 10000 \) four times to evaluate the clustering efficiency assuming the number of ground clusters being \( 5 \).
\begin{table}[t]
     \caption{Intrinsic Evaluation Metrics for Datasets}
    \centering
    \scriptsize
    \resizebox{\columnwidth}{!}{%
    
    \begin{tabular}{l ccc}
    \toprule
    \multirow{2}{*}{\textbf{Datasets}} & \multicolumn{1}{c}{\textbf{No. of Samples}} & \multicolumn{1}{c}{\textbf{Dimension}} & \multicolumn{1}{c}{\textbf{Number of Clusters}}\\
    \cmidrule{2-4}
    Airport  & 1048575 & 196 & -\\
    Wisconsin & 699 & 9 & 2\\
    Glass & 214 & 9 & 6\\
    Zoo & 101 & 16 & 7\\
    2d-20c-no0 & 1517 & 2 & 20\\
    st900 & 900 & 2 & 9\\
    D31 & 3100 & 2 & 31\\
    \bottomrule

    \end{tabular}%
    } \label{tab:1}
    \end{table}

\begin{table}[t]
     \caption{Intrinsic Evaluation Metrics for Datasets}
    \centering
    \scriptsize
    \resizebox{\columnwidth}{!}{%
    
    \begin{tabular}{l ccc|ccc}
        \toprule
         \multirow{2}{*}{\textbf{Methods}} & \multicolumn{3}{c}{\textbf{Airport(1000)}} & \multicolumn{3}{c}{\textbf{Airport(2000)}}\\
         \cmidrule{2-7}
          & \textbf{S. Score} & \textbf{D.B. Score} & \textbf{C.H. Index} & \textbf{S. Score} & \textbf{D.B. Score} & \textbf{C.H. Index} \\
         \midrule
         ESCA(G) &  0.68 & 0.75 &  78.62 & \bf 0.83 &   \bf 0.12 & 189.66 \\
         HSCA(G) &  0.24 &  \bf0.31 & \bf1114.77 & 0.21 &  0.32 &  2023.64\\
         \midrule
         Improvement & 64.7\% ($\dn$) & 44\%($\upa$) & 1317.9\%($\upa$)& 74.7\%($\dn$)& 166.7\%($\dn$)& 967.0\%($\upa$)\\
         \midrule
         ESCA(P) & \bf 0.70 &  0.40 &  140.02 &  0.82 &  0.40 & 93.31\\
         HSCA(P) & 0.21 & 0.32 & 1111.89 & 0.17 &0.34 & \bf2024.71\\
         \midrule
          Improvement & 70.0\% ($\dn$) & 20\%($\upa$) & 694.1\%($\upa$)& 79.27\%($\dn$)& 17.7\%($\upa$)& 2069.87\%($\upa$)\\
         \toprule


         \multirow{2}{*}{\textbf{Methods}} & \multicolumn{3}{c}{\textbf{Airport(5000)}} & \multicolumn{3}{c}{\textbf{Airport(10000)}}\\
         \cmidrule{2-7}
          & \textbf{S. Score} & \textbf{D.B. Score} & \textbf{C.H. Index} & \textbf{S. Score} & \textbf{D.B. Score} & \textbf{C.H. Index} \\
         \midrule
         ESCA(G) & \bf 0.57 &  0.56 &  144.84 &   0.83 &  0.46 & 218.53 \\
         HSCA(G) &  0.24 & \bf 0.32 & \bf 5299.59 &  0.24 & 0.33  & \bf 10584.5\\
         \midrule
          Improvement & 57.9\% ($\dn$) & 42.9\%($\upa$) & 3558.9\%($\upa$)& 71.1\%($\dn$)& 28.3\%($\upa$)& 4755.1\%($\upa$)\\
         \midrule
         ESCA(P) &  0.57 & \bf 0.56 & 144.83 & \bf 0.83 &  \bf 0.26 & 218.53 \\
         HSCA(P) &  0.21 & 0.34 &  5258.51 &  0.21 & 0.35 &  10428.3\\
         \midrule
          Improvement & 63.2\% ($\dn$) & 39.3\%($\upa$) & 3531.1\%($\upa$)& 74.7\%($\dn$)& 34.6\%($\dn$)& 4672.1\%($\upa$)\\
         \toprule

         \multirow{2}{*}{\textbf{Methods}} & \multicolumn{3}{c}{\textbf{Wisconsin Breast Cancer}} & \multicolumn{3}{c}{\textbf{Glass}}\\
         \cmidrule{2-7}
          & \textbf{S. Score} & \textbf{D.B. Score} & \textbf{C.H. Index} & \textbf{S. Score} & \textbf{D.B. Score} & \textbf{C.H. Index} \\
         \midrule
         ESCA(G) &  0.41 &  0.49 & 3.44 & 0.53 & 0.28 & 9.94\\
         HSCA(G) &  0.01 &  0.91 & 131.54 &  0.42 &  \bf 0.10 & \bf 111.66\\
         \midrule
         Improvement & 97.56\% ($\dn$) & 85.7\%($\dn$) & 3723.8\%($\upa$)& 20.8\%($\dn$)& 64.3\%($\upa$)& 1023.3\%($\upa$)\\
         \midrule
         ESCA(P) & \bf 0.46 & \bf 0.40 & 5.12 & \bf 0.57 & 0.26 & 10.56\\
         HSCA(P) & 0.27 & 0.65 & \bf219.06 & 0.28  & 0.12 & 110.72\\
         \midrule
         Improvement & 41.3\% ($\dn$) & 62.5\%($\dn$) & 4280.1\%($\upa$)& 50.1\%($\dn$)& 53.8\%($\upa$)& 948.5\%($\upa$)\\
         \toprule

         \multirow{2}{*}{\textbf{Methods}} & \multicolumn{3}{c}{\textbf{Zoo }} & \multicolumn{3}{c}{\textbf{2d-20c-no0}}\\
         \cmidrule{2-7}
          & \textbf{S. Score} & \textbf{D.B. Score} & \textbf{C.H. Index} & \textbf{S. Score} & \textbf{D.B. Score} & \textbf{C.H. Index} \\
         \midrule
         ESCA(G) &0.22  & 0.64 & 13.51 & 0.54 &  0.64 & 1953.87 \\
         HSCA(G) & 0.32 &  \bf 0.16 & 49.62 & 0.47 & 0.16 &\bf 9630.12\\
         \midrule
         Improvement & 45.5\% ($\upa$) & 75.0\%($\upa$) & 267.3\%($\upa$)& 12.9\%($\dn$)& 75.0\%($\upa$)& 393.9\%($\upa$)\\
         \midrule
         ESCA(P) &0.23  & 0.96 & 5.94 &  0.35 &  0.71 & 2.27\\
         HSCA(P) &\bf 0.35 & 0.17&\bf 50.85 &\bf 0.54 & \bf 0.10 & 8643.90\\
         \midrule
         Improvement & 52.2\% ($\upa$) & 82.3\%($\upa$) & 756.1\%($\upa$)& 54.3\%($\upa$)& 85.9\%($\upa$)& $38.1\times 10^5$\%($\upa$)\\

         \toprule
         \multirow{2}{*}{\textbf{Methods}} & \multicolumn{3}{c}{\textbf{st900 }} & \multicolumn{3}{c}{\textbf{D31}} \\
         \cmidrule{2-7}
           & \textbf{S. Score} & \textbf{D.B. Score} & \textbf{C.H. Index} & \textbf{S. Score} & \textbf{D.B. Score} & \textbf{C.H. Index}\\
         \midrule
         ESCA(G) & -0.03 & 0.68 & 75.31 & 0.39 & \bf0.56 & 2365.22\\
         HSCA(G) &  0.47 &  0.86 & 1548.80 & 0.17 & 0.85  & \bf 2102.17\\
         \midrule
         Improvement & 1667.7\% ($\upa$) & 26.5\%($\dn$) & 1956.6\%($\upa$)& 56.4\%($\dn$)& 51.8\%($\dn$)& 11.12\%($\upa$)\\
         \midrule
         ESCA(P) & -0.03 &  0.67 & 75.28 & 0.17 & 0.95 & 185.07\\
         HSCA(P) &\bf 0.49  & \bf 0.63&\bf 2388.67 &\bf 0.50 & 0.57 &\bf 69144.07\\
         \midrule
         Improvement & 1733.3\% ($\upa$) &6.0\%($\upa$) & 3073.1\%($\upa$)& 194.1\%($\upa$)& 40.0\%($\upa$)& 37261\%($\upa$)\\
         
         \bottomrule

    \end{tabular}%
    }

    \label{tab:table 0}
\end{table}  

\clearpage 
\subsection{Performance Measure and Validity Index:} When the true cluster labels of the datasets are not known, we will use three intrinsic cluster performance measures to compare the Euclidean Spectral Clustering algorithm with the Hyperbolic Spectral Clustering Algorithm. We will use three main intrinsic evaluation metrics, namely
\begin{itemize}
\item Silhouette Coefficient [S. Score] \cite{nich}
\item Davies-Bouldin score [D.B. Score] \cite{rojas}
\item Calinski–Harabasz index [C.H. Index] \cite{asik}
\end{itemize}

Also, for simulated datasets, we know the true cluster labels and the number of clusters. There we will compare ESCA and HSCA by the extrinsic evaluation metrics, namely
\begin{itemize}
    \item Adjusted Rand Index (ARI) \cite{warren}
    \item Normalized Mutual Information (NMI) \cite{tesmer}
\end{itemize}

\begin{figure*}[]
     \centering         
\begin{subfigure}[b]{0.3\textwidth}
         \centering
         \includegraphics[width=\textwidth]{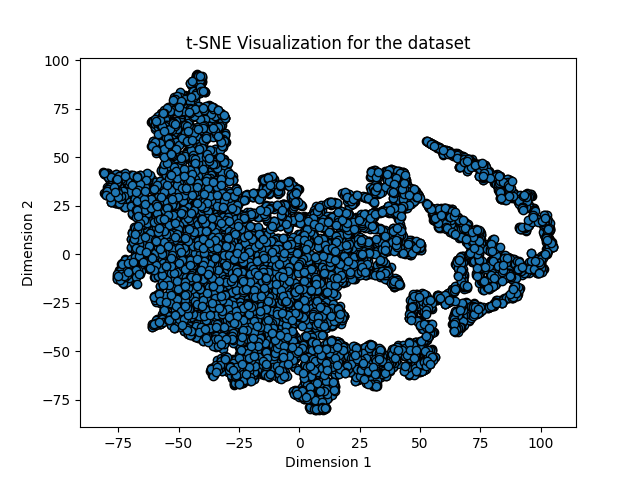}
         \caption{Airport(10000)}
         \label{fig:dataset10000c}
     \end{subfigure}
     \hfill
     \begin{subfigure}[b]{0.3\textwidth}
         \centering
         \includegraphics[width=\textwidth]{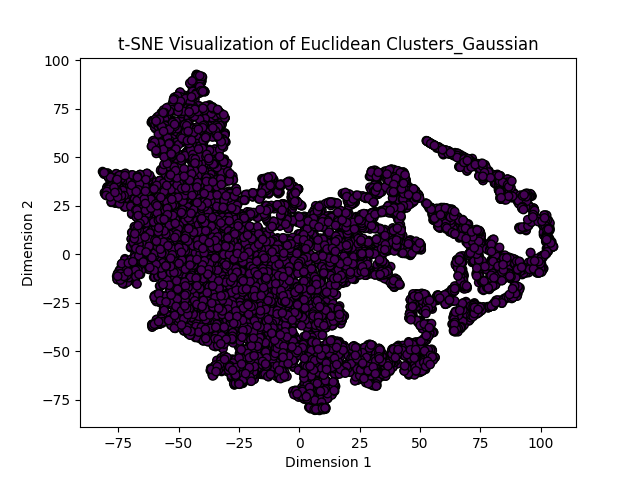}
         \caption{A(10000)-ESCA Clusters}
         \label{fig: euclidean10000c}
     \end{subfigure}
     \hfill
     \begin{subfigure}[b]{0.3\textwidth}
         \centering
         \includegraphics[width=\textwidth]{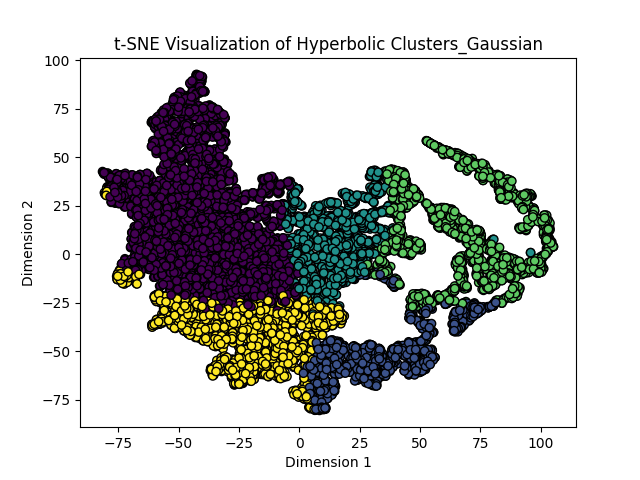}
         \caption{A(10000)-HSCA clusters}
         \label{fig:hyperbolic10000c}
     \end{subfigure}
        \caption{t-SNE Visualization of the Airport Dataset and Clusters}
        \label{fig:4}
        \centering
    \begin{subfigure}[b]{0.3\textwidth}
         \centering
         \includegraphics[width=\textwidth]{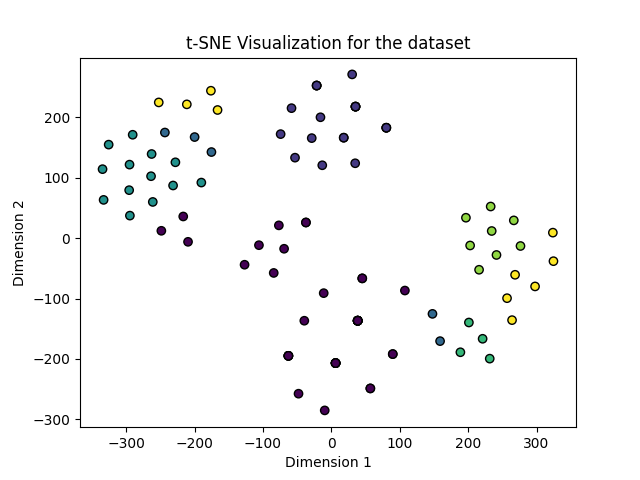}
         \caption{Dataset Zoo}
         \label{fig:dataset_zooc}
     \end{subfigure}
     \hfill
      \begin{subfigure}[b]{0.3\textwidth}
         \centering
         \includegraphics[width=\textwidth]{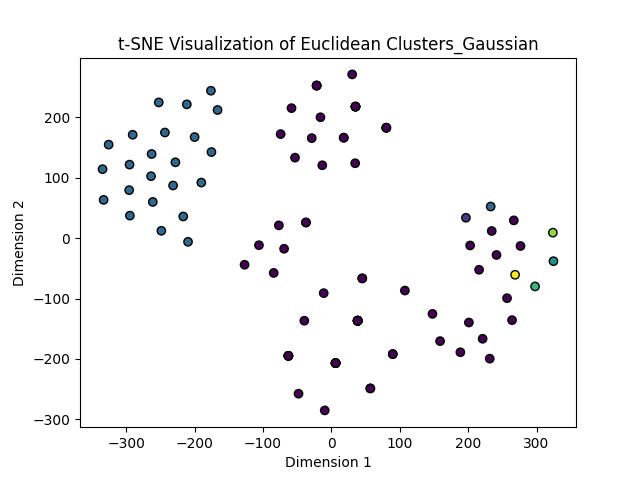}
         \caption{Zoo-ESCA Clusters}
         \label{fig:euclid_zooc}
     \end{subfigure}
     \hfill
      \begin{subfigure}[b]{0.3\textwidth}
         \centering
         \includegraphics[width=\textwidth]{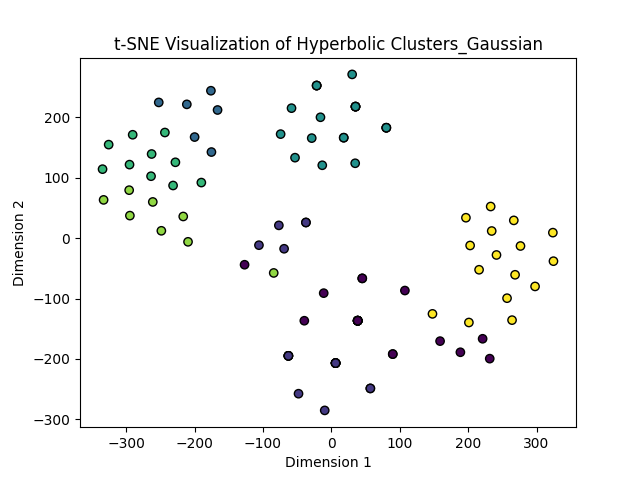}
         \caption{Zoo-HSCA Clusters}
         \label{fig:hyp_zooc}
     \end{subfigure}
     \caption{t-SNE Visualization of the Zoo Dataset and Clusters}
     \label{fig:5}
     \begin{subfigure}[b]{0.3\textwidth}
         \centering
         \includegraphics[width=\textwidth]{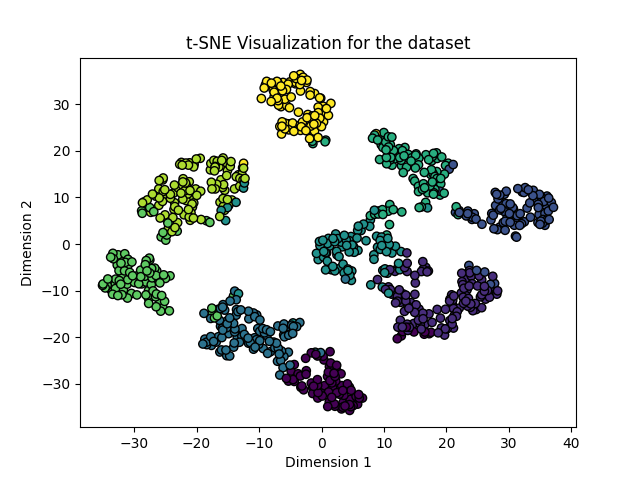}
         \caption{Dataset st900}
         \label{fig:dataset_st900c}
     \end{subfigure}
     \hfill
     \begin{subfigure}[b]{0.3\textwidth}
         \centering
         \includegraphics[width=\textwidth]{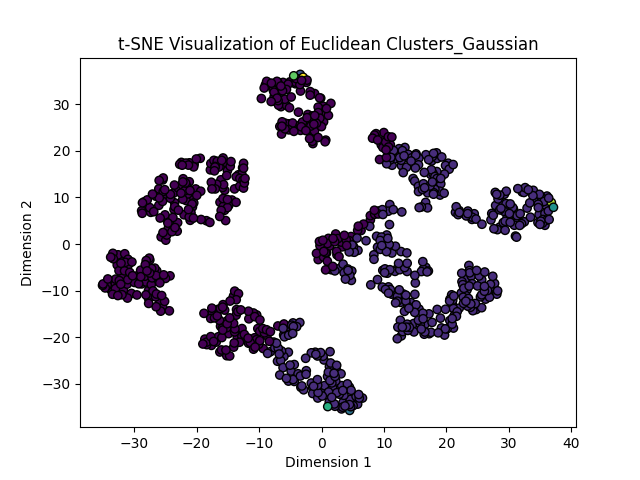}
         \caption{st900-ESCA Clusters}
         \label{fig: euclid_st900c}
     \end{subfigure}
     \hfill
     \begin{subfigure}[b]{0.3\textwidth}
         \centering
         \includegraphics[width=\textwidth]{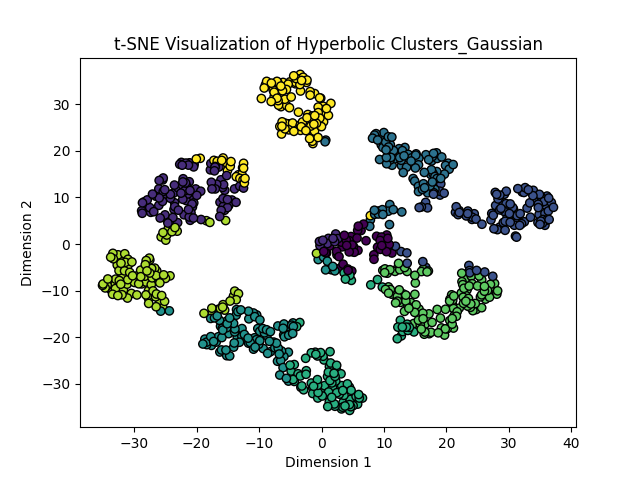}
         \caption{st900-HSCA clusters}
         \label{fig:hyp_st900c}
     \end{subfigure}
     \caption{t-SNE Visualization of the st900 Dataset and Clusters}
        \label{fig:6}
        
\end{figure*}


\clearpage

\begin{table*}[t]
     \caption{Extrinsic Evaluation Metrics for Datasets for the variants of ESCA and ELSC-K}
    \centering
    \scriptsize
    \resizebox{\columnwidth}{!}{%
    \begin{tabular}{l c c |c  c | c c |cc |cc |cc }
        \toprule
        \multirow{2}{*}{\textbf{Methods}} & \multicolumn{2}{c}{\textbf{Wisconsin }} & \multicolumn{2}{c}{\textbf{Glass}} &\multicolumn{2}{c}{\textbf{Zoo}} &\multicolumn{2}{c}{\textbf{2d-20c-no0}} &\multicolumn{2}{c}{\textbf{st900}} &\multicolumn{2}{c}{\textbf{D31}} \\
         \cmidrule{2-13}
           & \textbf{ARI} & \textbf{NMI} & \textbf{ARI} & \textbf{NMI} & \textbf{ARI} & \textbf{NMI}  & \textbf{ARI} & \textbf{NMI} & \textbf{ARI} & \textbf{NMI} & \textbf{ARI} & \textbf{NMI}\\
         \midrule
         ESCA(G) & 0.01 & 0.01 & 0.01 & 0.06 & 0.26 & 0.37 & 0.67 & 0.86 & 0.16 & 0.35 & 0.56 & 0.84\\
         HSCA(G) & 0.77 & 0.66 & 0.23 & 0.36 & 0.53 &  0.70 & \bf0.76 & \bf0.87 & 0.72 & 0.76 & 0.22 & 0.60\\

         \midrule
        Improvement & 7600\% ($\upa$) & 6500\%($\upa$) & 2200\%($\upa$)& 500\%($\upa$)& 104\%($\upa$)& 89\%($\upa$) &13\%($\upa$)&1.1\%($\upa$) & 350\%($\upa$)& 117\%($\upa$) & 60\%($\dn$)& 29\%($\dn$)\\
         \midrule
         
         ESCA(P)& 0.01 & 0.01 & 0.01 & 0.05 & 0.14 & 0.34 & 0.12 & 0.42 & 0.17 & 0.35 & 0.09 & 0.45\\
         HSCA(P) & 0.16 & 0.24 & 0.25 & 0.38 & 0.57 &  0.76 &  0.60 &  0.82 &  0.63 &  0.71 & 0.29 & 0.63\\ 
         \midrule
          Improvement & 1500\%($\upa$) & 2300\%($\upa$)& 2400\%($\upa$)& 660\%($\upa$) & 307\%($\upa$)& 124\%($\upa$)& 400\%($\upa$)& 95\%($\upa$) & 354\%($\upa$)&  103\%($\upa$)& 222\%($\upa$) & 40\%($\upa$)\\
         \midrule
        
         \toprule
          ELSC-K(G) & 0.82 & 0.72 & 0.23 & 0.40 & 0.70 & \bf0.78  & 0.54 & 0.79 & 0.72 & 0.78 & 0.94 & 0.96\\
         HLSC-K(G) & 0.84 & 0.73 & 0.25 & 0.39 & 0.71 & \bf 0.78 & 0.55 &  0.80 & \bf0.75 & \bf0.81 & 0.95 & 0.97\\
         \midrule
         Improvement & 2.4\%($\upa$) &1.4\%($\upa$) &8.7\%($\upa$) &2.5\%($\dn$) & 1.4\%($\dn$)& 0.00\%(-)&1.9\%($\upa$) & 1.3\%($\upa$)& 4.2\%($\upa$)& 3.8\%($\upa$)& 1\%($\upa$)& 1\%($\upa$)\\
         \midrule
         ELSC-K(P) & 0.86 & 0.76 & 0.25 & 0.39 & 0.49 & 0.70 & 0.54 & 0.79 &  0.72 &  0.80 & 0.95 & 0.96 \\
         HLSC-K(P) & \bf0.87 & \bf0.78 & \bf 0.27 & \bf 0.41 & \bf 0.79 & 0.77 & 0.53 & 0.78 & 0.69 & 0.73 & \bf 0.96 & \bf 0.98\\
         \midrule
         Improvement & 1.1\%($\upa$) &2.6\%($\upa$) &8\%($\upa$) &2.5\%($\upa$) & 5.1\%($\upa$)& 10\%($\upa$)&1.9\%($\dn$) & 1.3\%($\dn$)& 4.2\%($\dn$)& 8.75\%($\dn$)& 1\%($\upa$)& 1\%($\upa$)\\
         \midrule
         \toprule
         FESC(G) & 0.01 & 0.01  & 0.01 & 0.07 & 0.12 & 0.23 & 0.25 & 0.73 & 0.11 & 0.21 & 0.02 & 0.28\\
         FHSC(G)  & 0.05 & 0.02 & 0.02 & 0.04 & 0.36 & 0.52 & 0.13 & 0.39 & 0.24 & 0.48 & 0.14 & 0.67\\
         \midrule
         Improvement & 400\%($\upa$) & 100\%($\upa$) & 100\%($\upa$) & 43\% ($\upa$) & 200\%($\upa$) & 126\%($\upa$)& 48\%($\dn$) & 47\%($\dn$) & 118\%($\upa$) & 129\%($\upa$) & 600\%($\upa$)& 139\%($\upa$) \\
         \midrule
         FESC(P) & 0.01 & 0.01 & 0.03 & 0.07  & 0.11 & 0.24 & 0.16 & 0.44 & 0.11 & 0.22 & 0.10 & 0.38\\
         FHSC(P)  & 0.84 & 0.75 & 0.25 &  0.37 & 0.68 & 0.77 & 0.37 & 0.79 & 0.70 & 0.78 & 0.94 & 0.96\\
         \midrule
         Improvement & 8300\%($\upa$) & 7400\%($\upa$) & 733\%($\upa$) & 429\% ($\upa$) & 518\%($\upa$) & 221\%($\upa$)& 131\%($\upa$) & 79\%($\upa$) & 536\%($\upa$) & 254\%($\upa$) & 840\%($\upa$)& 153\%($\upa$) \\

         \midrule
         \toprule
         LRR &  -0.01   & 0.00 & -0.03  & 0.12  & 0.01  & 0.07  & -0.00  & 0.04  & 0.00  & 0.04  &  0.00 & 0.03\\
       
         \bottomrule
 \end{tabular}}
    \label{tab:table 3}
\end{table*}  

\subsection{Effectiveness of HSCA:} We now assess the performance of our proposed HSCA against the following well-known versions of the Spectral Clustering algorithms [with the Gaussian (G) and Poisson (P) kernels]: 
\begin{enumerate}
    \item Euclidean Spectral Clustering Algorithm [ESCA] \cite{tutorial}
    \item Fast Euclidean Spectral Clustering Algorithm \cite{he} [we have also developed a hyperbolic version of FESC, which we named as Fast Hyperbolic Spectral Clustering Algorithms (FHSC) analogously to the HSCA and included in our comparison]
    \item Low Rank Representation Clustering (LRR) \cite{liu}
    \item Approximated spectral clustering with $k$-means-based landmark selection (ELSC-K) \cite{chencai} [we have also compared it with the respective hyperbolic variants HLSC-K with respect to the Gaussian(G) and Poisson Kernels(P)]. See Supplementary for a pseudo-code of the algorithm.
\end{enumerate}

\subsection{Discussions:}

When analyzing the outcomes presented in Tables 2 and 3, let's delve into the alterations in the evaluation metrics resulting from running the variants of the HSCA as specified. In Table 2, we've outlined the outcomes from executing the variants of ESCA alongside their corresponding hyperbolic versions. We will handle the Airport dataset separately since there are no ground cluster levels available for it. Furthermore, the Airport dataset is categorical, inheriting a hierarchical structure. When a dataset includes a hierarchy, the initial transformation of HSCA embeds the dataset closer to a similar hierarchy compared to other points appearing in different hierarchies. Consequently, the hyperbolic distance between points in similar hierarchies becomes significantly smaller compared to distances between points in different hierarchies, where Euclidean Spaces exhibit poor performance. Once the embedding process is complete, the remainder of HSCA categorizes the points almost according to their hierarchy.

\subsubsection{Airport Dataset:} If we compare the obtained numerical results with the visual clusters depicted through the t-SNE plot in Fig. \ref{fig:4}, we observe that for the Airport dataset, the Silhoutte Score have been significantly decreased in the HSCA compared to the ESCA. This is primarily due to the fact that both the ESCA(G) and ESCA(P) have given one predominant clusters, whereas the HSCA clusters have given $5$ distinct clusters. Since the Silhoutte Score increases if the mean distance between clusters in the same dataset decreases and the mean smallest distance from the other clusters increases, the ESCA clusters are expected to give higher Silhoutte Score compared to the HSCA clusters simply because there are more distinct clusters formed in HSCA. On the other hand, the Davies-Bouldin score depends on the distances between the centroids of the same cluster vs the distances between cluster centroids in distinct clusters. Unlike the Silhoutte Score, there is a drastic change between the values in the Davies-Bouldin Scores compared to the Euclidean with respect to the Hyperbolic, formation of more distinct clusters along with the incorporated the hyperbolic metric to compute the particluar score have resulted in the drastic change. A good cluster is generally charecterized by the higher Silhoutte Score (closer to 1), lower Davies-Bouldin Score (preferably lower than 1) but high Calinski-Harabasz Score. Aparently HSCA performs poorly compared to the ESCA if the first two metrics are considered, but the Calinski-Harabasz Indices have gone significantly high in the case of HSCA, this is again due to incorporating the hyperbolic metric and the nature of distribution that the index follows. Since the ground level clusters are not available for this dataset and it has some inherent hierarchical structure, we can conclude the HSCA has surmounted the ESCA  by giving more distinct clusters.

Upon scrutinizing the numerical outcomes alongside the visual representations of clusters portrayed in the t-SNE plot in Fig. \ref{fig:4}, a notable trend emerges within the Airport dataset: the Silhouette Score experiences a significant decline in the HSCA compared to the ESCA. This phenomenon can be attributed to the fact that both the ESCA(G) and ESCA(P) methods yield one predominant cluster, whereas the HSCA approach generates five distinct clusters. Given that the Silhouette Score is contingent upon a reduction in the mean distance between clusters within the same dataset and an increase in the mean smallest distance from other clusters, it follows that ESCA clusters are predisposed to higher Silhouette Scores relative to HSCA clusters due to the greater number of distinct clusters formed by HSCA. Conversely, the Davies-Bouldin score hinges on the distances between centroids within the same cluster versus the distances between cluster centroids in distinct clusters. In contrast to the Silhouette Score, there is a notable disparity in the Davies-Bouldin Scores between the Euclidean and Hyperbolic scenarios, resulting from the formation of more distinct clusters and the incorporation of the hyperbolic metric in computing the score. A desirable cluster is typically characterized by a higher Silhouette Score (closer to 1), a lower Davies-Bouldin Score (preferably below 1), and a high Calinski-Harabasz Score. Although HSCA appears to underperform compared to ESCA based on the first two metrics, the Calinski-Harabasz Indices show a significant increase in the case of HSCA. This improvement is once again attributed to the incorporation of the hyperbolic metric and the distribution nature that the index follows. Despite the unavailability of ground-level clusters for this dataset and its inherent hierarchical structure, we can infer that HSCA outperforms ESCA by yielding more distinct clusters.

\subsubsection{Other Real Datasets: Wisconsin, Glass \& Zoo} 

Here we examine the effects of HSCA versus ESCA for the Wisconsin Breast Cancer, Glass, and Zoo datasets. In all these three cases, HSCA variants outperform ESCA clusters [see \ref{fig:5} and the supplementary document]. This is primarily attributed to the datasets possessing some form of hierarchical structure. When embedding them into Euclidean Spaces, poor cluster formations arise. Notably, only the Approximated Spectral Clustering with $k$-means based landmark selection (ELSC-K) exhibits superior performance compared to standard Euclidean Spectral Clustering. This can be attributed to the $k$-means based landmark selection, which captures the inherent hierarchy to some extent, leading to improved spectral clusterings. However, Hyperbolic variants of ELSC-K still outperform, as the hyperbolic space is better suited for embedding hierarchical data.

\subsubsection{Synthetic Datasets: 2d-20c-no0, st900 \& D15} The similar clustering forms also work for these datasets. The Poincar\'{e} embedding takes care of the part that the points in the same hierarchy will remain close. Here we also observe that the variants of HSCA perform way better than the corresponding variants of the ESCA as expected. The ESCA and HSCA clusters formed on the st-900 dataset have been shown in Figure \ref{fig:6}. For other synthetic datasets, we refer to the supplementary. 

\subsection{Ablation Experiments} The kernel hyperparameter $\sigma$ requires tuning to achieve optimal performance. In our experiments, we set $\sigma$ to a small value (approximately $0.1$), which has been found to be optimal for most variants of the HSCA. We have observed that there is minimal fluctuation in the ARi or NMi values when the hyperparameter values ($\frac{1}{\sigma^2}$ for the Gaussian Kernel and $\frac{1}{2\sigma}$ for the Poisson Kernel) are very small (less than $0.1$). These values exhibit bounded variation within a small range when the hyperparameter is very small. Our approach for this analysis follows a similar methodology to that presented in \cite{bai}. For detailed graphs illustrating the tuning process, please refer to Figures \ref{fig:7} and \ref{fig:8}.

The clusters related to other datasets can be found in Section \emph{Experiments and Results} of the supplementary document. 

\begin{figure*}[t]
     \centering
     \begin{subfigure}[b]{0.16\textwidth}
         \centering
         \includegraphics[width=\textwidth]{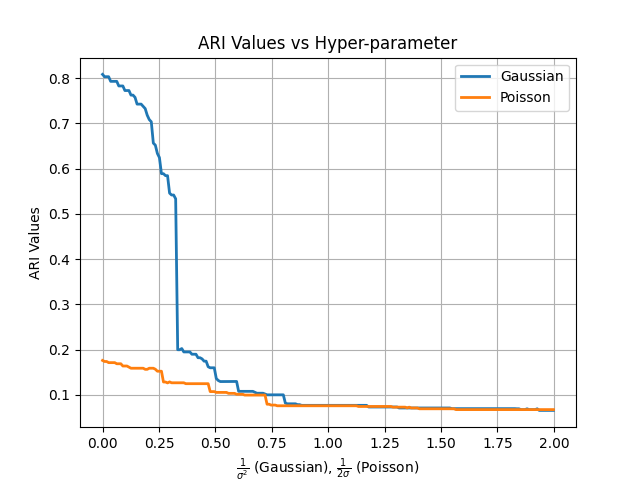}
         \caption{WBCD}
         \label{fig:ari_wisc1}
     \end{subfigure}
     \hfill
     \begin{subfigure}[b]{0.16\textwidth}
         \centering
         \includegraphics[width=\textwidth]{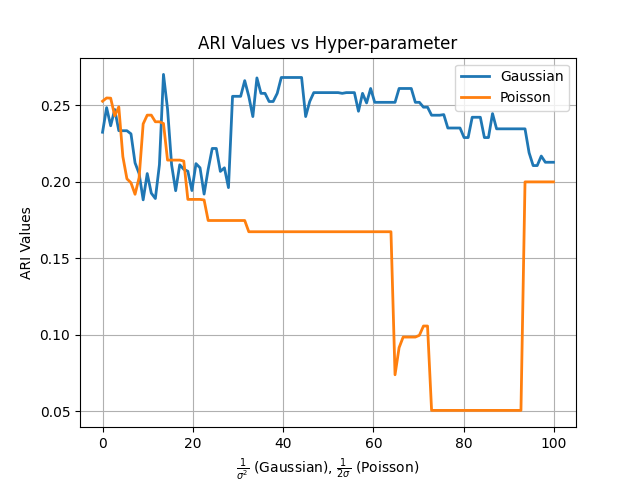}
         \caption{Glass}
         \label{fig: ari_glass1}
     \end{subfigure}
     \hfill     
     \begin{subfigure}[b]{0.16\textwidth}
         \centering
         \includegraphics[width=\textwidth]{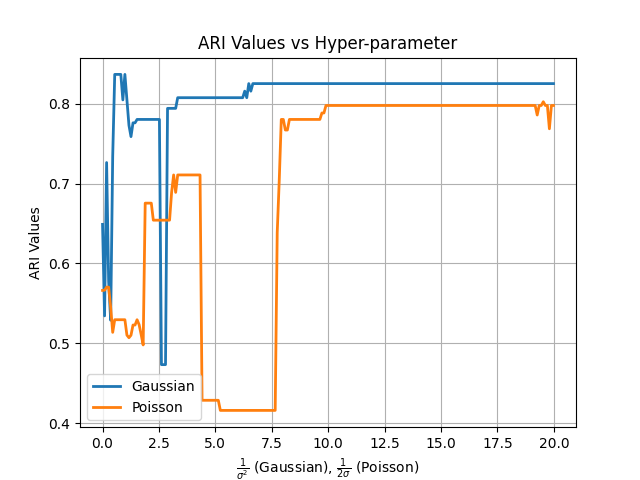}
         \caption{Zoo}
         \label{fig:ari_zoo1}
     \end{subfigure}
     \hfill
     \begin{subfigure}[b]{0.16\textwidth}
         \centering
         \includegraphics[width=\textwidth]{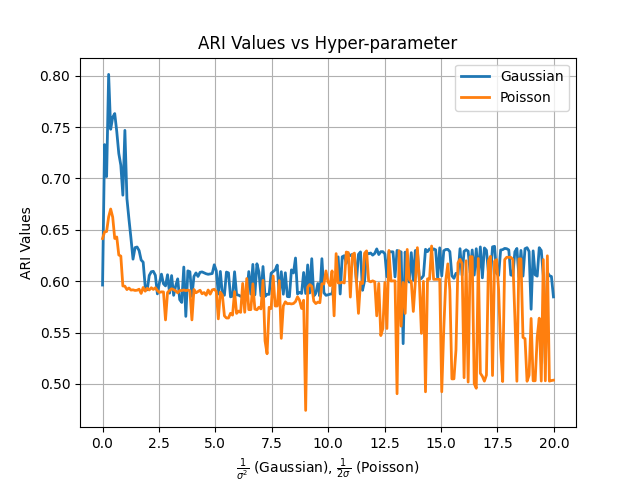}
         \caption{2d-20c-no0}
         \label{fig: ari_2d-20c-no01}
     \end{subfigure}
     \hfill     
     \begin{subfigure}[b]{0.16\textwidth}
         \centering
         \includegraphics[width=\textwidth]{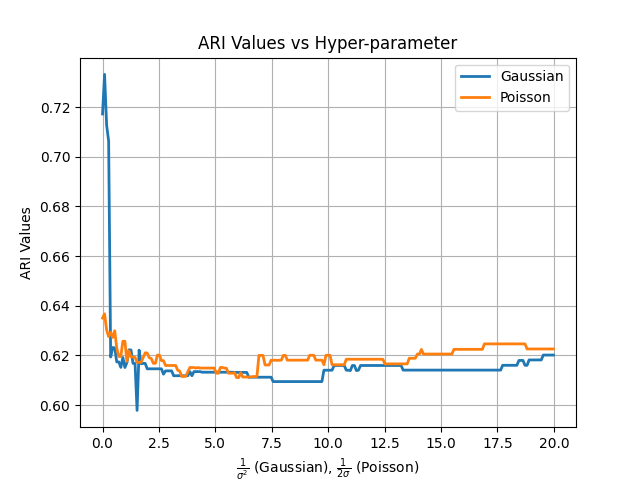}
         \caption{st900}
         \label{fig: ari_st9001}
     \end{subfigure}
     \hfill
     \begin{subfigure}[b]{0.16\textwidth}
         \centering
         \includegraphics[width=\textwidth]{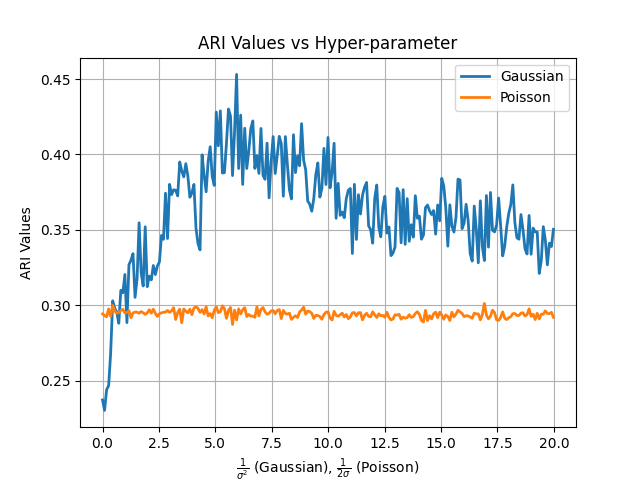}
         \caption{D31}
         \label{fig:ari_D311}
     \end{subfigure}
     \caption{Visualization of ARI Values with respect to the hyperparameter $   \frac{1}{\sigma^2}$ (for Gaussian) and $\frac{1}{2\sigma}$ (for Poisson)}
     \label{fig:7} 
\end{figure*}
\begin{figure*}[]
     \centering
     \begin{subfigure}[b]{0.16\textwidth}
         \centering
         \includegraphics[width=\textwidth]{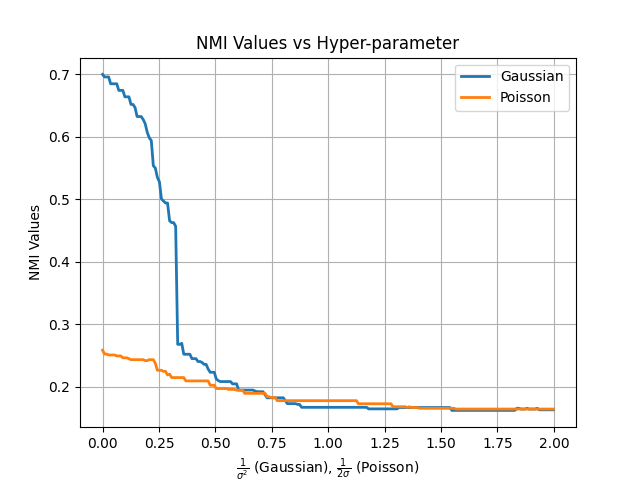}
         \caption{WBCD}
         \label{fig:nmi_wisc1}
     \end{subfigure}
     \hfill
     \begin{subfigure}[b]{0.16\textwidth}
         \centering
         \includegraphics[width=\textwidth]{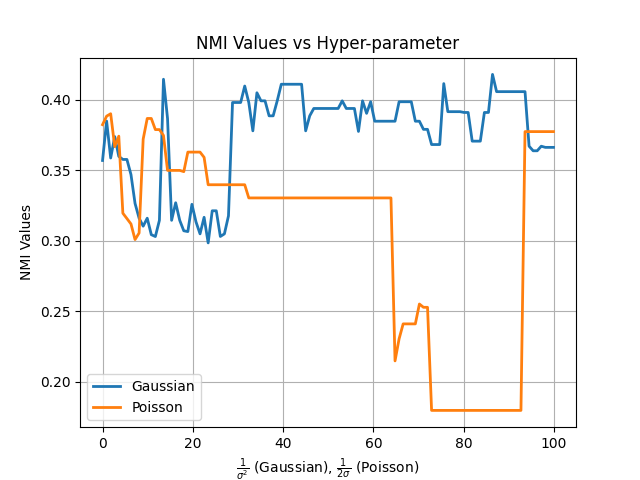}
         \caption{Glass}
         \label{fig: nmi_glass1}
     \end{subfigure}
    \hfill
     \begin{subfigure}[b]{0.16\textwidth}
         \centering
         \includegraphics[width=\textwidth]{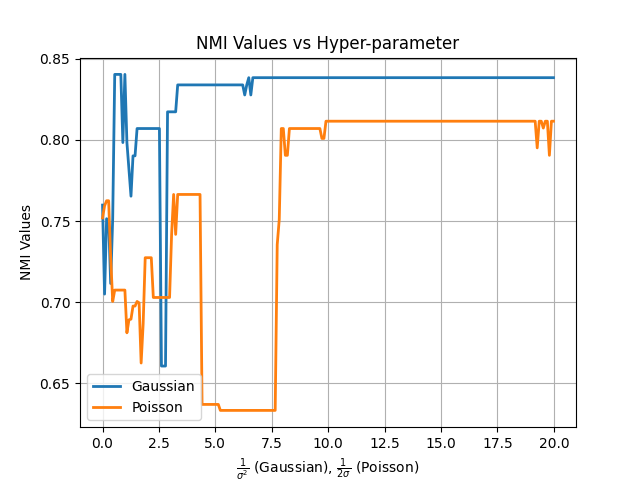}
         \caption{Zoo}
         \label{fig:nmi_zoo1}
     \end{subfigure}
     \hfill
     \begin{subfigure}[b]{0.16\textwidth}
         \centering
         \includegraphics[width=\textwidth]{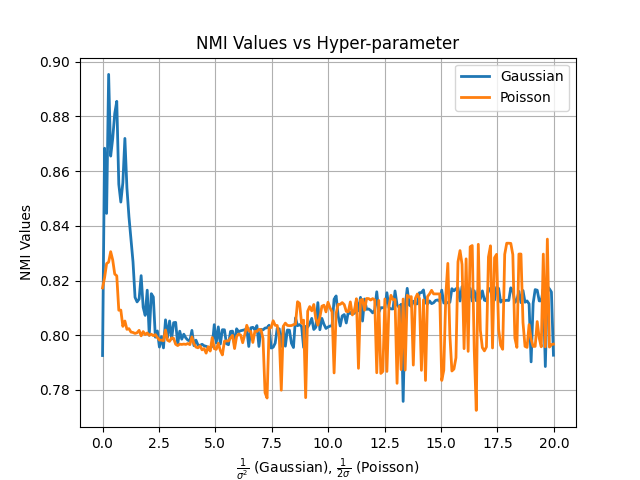}
         \caption{2d-20c-no0}
         \label{fig: nmi_2d-20c-no01}
     \end{subfigure}
     \hfill
     \begin{subfigure}[b]{0.16\textwidth}
         \centering
         \includegraphics[width=\textwidth]{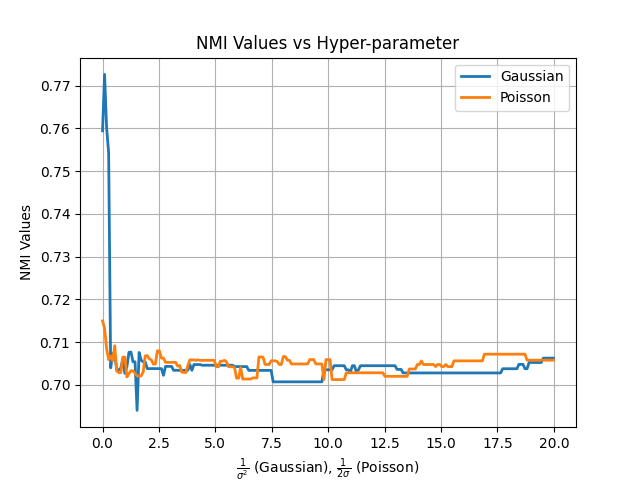}
         \caption{st900}
         \label{fig: nmi_st9001}
     \end{subfigure}
     \hfill
     \begin{subfigure}[b]{0.16\textwidth}
         \centering
         \includegraphics[width=\textwidth]{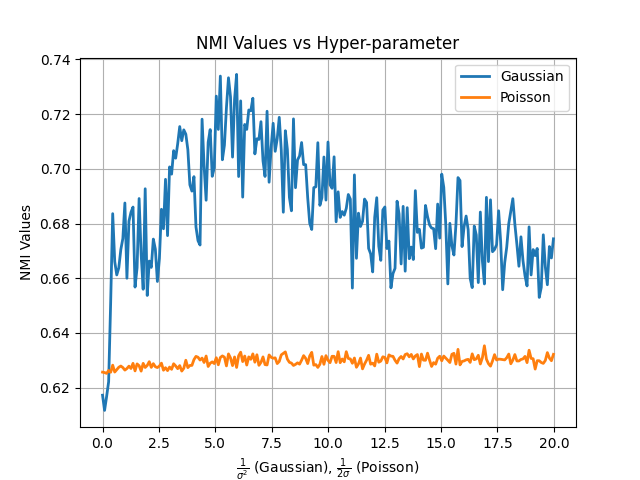}
         \caption{D31}
         \label{fig:nmi_D311}
     \end{subfigure}
     \caption{Visualization of NMI Values with respect to the hyperparameter $\frac{1}{\sigma^2}$ (for Gaussian) and $\frac{1}{2\sigma}$ (for Poisson)}
    \label{fig:8}
\end{figure*}

\section{Conclusion}\label{sec:7}
     In this paper, we highlight the limitations of Euclidean space in conveying meaningful information and representing arbitrary tree or graph-like structures. We emphasize the inability of Euclidean spaces with low dimensions to embed such structures while preserving the associated metric. Conversely, Hyperbolic Spaces offer a compelling solution, even in shallow dimensions, to efficiently represent such data and yield superior clustering results compared to Euclidean Spaces. Our primary contributions include proposing a spectral clustering algorithm tailored for hyperbolic spaces, where a hyperbolic similarity matrix replaces the traditional Euclidean one. Additionally, we provide theoretical analysis on the weak consistency of the algorithm, showing convergence at least as fast as spectral clustering on Euclidean spaces. We also introduce hyperbolic versions of well-known Euclidean spectral clustering variants, such as Fast Spectral Clustering with approximate eigenvectors (FastESC) and Approximate Spectral Clustering with $k$-means-based Landmark Selection.

Our approach to clustering data in hyperbolic space via spectral decomposition of the modified similarity matrix reveals a more effective method for clustering complex hierarchical data compared to Euclidean Spaces. Notably, this algorithm converges at least as rapidly as Spectral Clustering on Euclidean Spaces without introducing additional computational complexity. Moreover, variants of Hyperbolic Spectral Clustering exhibit superior adequacy and performance over their Euclidean counterparts. We anticipate that this method will expedite significant advancements in non-deep machine learning algorithms.

\section*{Availability of Datasets and Codes:} The airport dataset is available on \href{https://www.kaggle.com/datasets/tylerx/flights-and-airports-data}{https://www.kaggle.com/datasets/tylerx/flights-and-airports-data}. The other real datasets have been taken from \href{https://github.com/milaan9/Clustering-Datasets/tree/master/01.%20UCI}{https://github.com/milaan9/Clustering-Datasets/tree/master/01.\%20UCI}  and the synthetic datasets are available \href{https://github.com/milaan9/Clustering-Datasets/tree/master/02.%20Synthetic}{https://github.com/milaan9/Clustering-Datasets/tree/master/02.\%20Synthetic}. 

\section{Appendix}

\section{Proofs of Lemmas and Theorem from Section 5}
\subsection{Proof of Lemma \ref{lem:5.1}}
\begin{proof}
    \textbf{Step 1:} We have $\|x\|,\|y\|< 1$, hence $\|x\|^2,\|y\|^2<1\implies (1-\|x\|^2)(1-\|y\|^2)<1$. Hence we can write 
    \begin{align*}
        \frac{\|x-y\|^2}{(1-\|x\|^2)(1-\|y\|^2)}>\|x-y\|^2.
    \end{align*}
    But we also have 
    \begin{align*}
        \delta(x,y)=2\frac{\|x-y\|^2}{(1-\|x\|^2)(1-\|y\|^2)}.
    \end{align*}
    Combining this with the last inequality, we get
    \begin{align*}
        \delta(x,y)>2\|x-y\|^2.
    \end{align*} \\

\textbf{Step 2:} For $x\in\rr$, $\frac{d}{dx}(\sinh^{-1}(x))=\frac{1}{\sqrt{1+x^2}}>0$. Therefore the inverse sine hyperbolic function is a strictly increasing function of $x$.  By Step 1, we have $\delta(x,y)>2\|x-y\|^2$. Therefore, we have $\frac{\delta(x,y)}{2}\geq \|x-y\|^2$. This also implies $\\sqrt{\frac{\delta(x,y)}{2}}\geq\|x-y\|$. \\
Since $d(x,y)=2sinh^{-1}\left(\sqrt{\frac{\delta(x,y)}{2}}\right)$ and the inverse sine hyperbolic function is increasing, we can write 
\begin{align*}
    d(x,y)\geq 2\sinh^{-1}(\|x-y\|)
\end{align*}
We know that for $0<s<t$, $exp(-s)>exp(-t)$. This enables us to write 
\begin{align*}
    K_{H_G}(x,y)&=exp(-ad(x,y)^2)\\
    &\leq exp(-4a[sinh^{-1}(\|x-y\|)]^2).
\end{align*}

\textbf{Step 3:} Note that for $0\leq x\leq 1$, $\frac{1}{\sqrt{1+x^2}}\geq \frac{1}{\sqrt{2}}$.  Let $f(x):=\sinh^{-1}(x)-\frac{x}{2}$. Then $f$ is differentiable and we get $f^\prime(x)=\frac{1}{\sqrt{1+x^2}}-\frac{1}{2}\geq \frac{2-\sqrt{2}}{2\sqrt{2}}$. Therefore $f$ is increasing on $[0,1]$ and for $0\leq x\leq 1$, $\sinh^{-1}(x)\geq \frac{x}{2}$. Hence $\exp(-\sinh^{-1}(\|x-y\|^2))\leq \exp\left(-\frac{\|x-y\|^2}{2}\right)$. Therefore following step 2, we get
\begin{align*}
    K_{H_G}(x,y)&\leq \exp(-4a[\sinh^{-1}(\|x-y\|)]^2)\\
    &\leq \exp\left(-4a\frac{\|x-y\|^2}{4}\right)\\
    &=\exp(-a\|x-y\|^2)=K(x,y),
\end{align*} 
\end{proof}

\subsection{Proof of Lemma \ref{lem:5.2}}
\begin{proof}
    $K_{H_G}(x)=K_H(x,0)\leq K(x,0)=\exp(-a\|x\|^2)$ [by Lemma \ref{lem:5.1} and Remark \ref{rem:5.2}].
Therefore following step 3 of Lemma \ref{lem:5.1} we write,
\begin{align*}
    \int_{H}|K_{H_G}(x)|dx &\leq \int_{H}|\exp(-a\|x\|^2)|dx\\
    &\int_{H}\exp(-a\|x\|^2)dx\\ 
    & \leq \int_{\rr^n}\exp(-a\|x\|^2)dx \\
    & <\infty
\end{align*}  as $H$ is only a subset of the unit ball in $\rr^n$. The last integral is finite since the integrand is the usual Gaussian distribution. 
\end{proof}
\subsection{Proof of Lemma \ref{lem:5.3}}
\begin{proof}
    $f$ is radial if and only if for every $M\in SO_n(\rr^n)$ [where $SO_n(\rr^n)$ is the special unitary group on $\rr^n$, i.e. consisting of all $n\times n$ matrices over $\rr$ with determinant $1$], $f(Mx)=f(x)$ [as the operation $x\to Mx$ only rotates $x$, does not change its magnitude, i.e. $\|Mx\|=\|x\|$]. Then for any arbitrary $M\in SO_n(\rr^n)$
\begin{align*}
    \widehat{f}(Mt) &=\int_{\Omega}f(x)e^{-i<Mt,x>}dx\\
    &=\int_{M(\Omega)} f(Ms)e^{-i<Mt,Ms>}ds\hspace{2ex}\\
    &[\text{change of variable $x\to Ms$}]\\
    &= \int_{\Omega}f(s)e^{-i<t,s>}ds \hspace{2ex}[\text{since $\Omega$ is symmetric}]\\
    &= \widehat{f}(t),
\end{align*}
where the second equality follows from the fact that we get by the conjugate linearity of the inner product: $<Mt,Ms>=\\<M^*Mt,s>$  $=<t,s>$ since $M^*M=I_n$ [$M\in SO_n(\rr^n)$]. 
\end{proof}
\subsection{Proof of Lemma \ref{lem:5.4}}
\begin{proof}
     Let $f(x)=K_{H_G}(x)=exp(-ad(x,0)^2)$. Then by Lemma \ref{lem:5.1}, we have $f(x)\leq exp(-a\|x\|^2)$ for all $x\in H$. Exploiting fully the fact that $\widehat{k}$ is radial (and hence real valued), we get
\begin{align*}
    |\widehat{K}(w)| &=|\int_{H}f(x)e^{-iw^tx}dx|\\
    &=|\int_{H}f(x)e^{a\|x\|^2}e^{-a\|x\|^2}e^{-iw^tx}dx|\\
    & \leq \int_{H}|f(x)e^{a\|x\|^2}e^{-a\|x\|^2}e^{-iw^tx}|dx\\
    &\leq\int_H |e^{-a\|x\|^2}e^{-iw^tx}|dx \hspace{2ex}\\
    &[\text{This is just Fourier Transform of the Euclidean}\\
    &\text{Gaussian Kernel over $H$}]\\
    &\leq \int_{\rr^n} |e^{-a\|x\|^2}e^{-iw^tx}|dx\\
    &\leq C^\prime e^{-p\|w\|^2}\\
    &\leq Cexp(-l\|w\|),
\end{align*}
where $C^\prime$ and $C$ are some appropriately chosen constants. 
\end{proof}
\subsection{Proof of Theorem \ref{them:5.1}}
\begin{proof}
    Combining Lemma \ref{lem:5.4} and Theorem 3 \cite{zhou} we get
    \begin{align*}
        \log(\mathcal{N}(\mathcal{F},\epsilon, \|\cdot\|_{\infty}))\leq C_0 \log\left(\frac{1}{\epsilon}\right)^{d+1},
    \end{align*}
    for some constant $C_0$ chosen appropriately and $d$ is the dimension of $H$. Since $d$ is a constant for $H$, we can write the above inequality as 
    \begin{align*}
        \log(\mathcal{N}(\mathcal{F},\epsilon, \|\cdot\|_{\infty}))\leq C_1 \log\left(\frac{1}{\epsilon}\right)^{2}.
    \end{align*}
    Following the same sequence of computation as in Theorem 19\cite{lux}, we get
    \begin{align*}
        \int_{0}^{\infty}\sqrt{\log(\mathcal{N}, \epsilon, L^2(P_n))}d\epsilon < \infty
    \end{align*}
Hence following Theorem 19\cite{lux} we write
\begin{align*}
        \sup_{f\in \mathcal{F}}|P_nf-Pf|&\leq \frac{c}{\sqrt{n}}\int_{0}^{\infty}\sqrt{\log(\mathcal{N}, \epsilon, L^2(P_n))}d\epsilon\\
        &+\sqrt{\frac{1}{2n}\log\left(\frac{2}{\delta}\right)}\\
        &<\frac{C_1}{\sqrt{n}}+\sqrt{\frac{1}{2n}\log\left(\frac{2}{\delta}\right)},
    \end{align*}
for some appropriately chosen constant $C_1$. Since $\delta>0$ we get,
\begin{align*}
    \sup_{f\in \mathcal{F}}|P_nf-Pf| \leq C\left(\frac{1}{\sqrt{n}}\right).
\end{align*}
Finally Finally, combining theorem 16 of \cite{lux} with the last inequality, we get
\begin{align*}
    \sup_{f\in \mathcal{F}}|P_nf-Pf|= \mathcal{O}\left(\frac{1}{\sqrt{n}}\right).
\end{align*}
\end{proof}

\subsection*{System Specification:} 
The experiments were carried out on a personal computer with $12$th Gen Intel(R) Core(TM) i5-1230U   1.00 GHz Processor, 16 GB RAM, Windows $11$ Home $22$H$2$ and Python $3.11.5$.

\end{document}